\documentclass{article}

\PassOptionsToPackage{numbers,sort&compress}{natbib}
\usepackage[preprint]{neurips_2026}

\usepackage[utf8]{inputenc} % allow utf-8 input
\usepackage[T1]{fontenc}    % use 8-bit T1 fonts
\usepackage{hyperref}       % hyperlinks
\usepackage{url}            % simple URL typesetting
\usepackage{booktabs}       % professional-quality tables
\usepackage{amsfonts}       % blackboard math symbols
\usepackage{nicefrac}       % compact symbols for 1/2, etc.
\usepackage{microtype}      % microtypography
\usepackage[table]{xcolor}   % colors; [table] enables \cellcolor in tabular

\usepackage{graphicx}
\usepackage{subcaption}
\usepackage{wrapfig}
\usepackage{tabularx}
\usepackage{array}
\usepackage{booktabs}
\usepackage{multirow}
\usepackage{makecell}
\usepackage{adjustbox}
\usepackage[ruled]{algorithm2e}
\usepackage{amsmath,amsfonts,amssymb}
\usepackage{amsthm}
\usepackage{mathrsfs}

\usepackage{textcomp}
\usepackage{stfloats}
\usepackage{verbatim}
\usepackage{float}
\usepackage{pifont}
\usepackage{enumitem}
\usepackage[normalem]{ulem}
\usepackage{manyfoot}
\usepackage{listings}
\usepackage{lipsum}
\usepackage{xspace}
\usepackage[most]{tcolorbox}
\usepackage{type1cm}
\usepackage{titletoc}

\usepackage{pifont}

\newtheorem{theorem}{Theorem}
\tcbset{
  aibox/.style={
    width=\linewidth,
    top=8pt,
    bottom=4pt,
    colback=blue!6!white,
    colframe=black,
    colbacktitle=black,
    enhanced,
    center,
    attach boxed title to top left={yshift=-0.1in,xshift=0.15in},
    boxed title style={boxrule=0pt,colframe=white,},
  }
}
\newtcolorbox{AIbox}[2][]{aibox,title=#2,#1}
\title{TINS: Test-time ID-prototype-separated Negative Semantics Learning for OOD Detection}

\author{%
  Yifeng Yang\textsuperscript{\rm1},
  Jubo Feng\textsuperscript{\rm1},
   Jing Xu\textsuperscript{\rm4},
  Xinbing Wang\textsuperscript{\rm1},
  Qinying Gu\textsuperscript{\rm2},
  Nanyang Ye\textsuperscript{\rm1,2,3} \thanks{Nanyang Ye is the corresponding author.} \\
  \textsuperscript{\rm1} Shanghai Jiao Tong University,\textsuperscript{\rm2} Shanghai Artificial Intelligence Laboratory\\
  \textsuperscript{\rm3} Shanghai Innovation Institute, \textsuperscript{\rm4} University of Electronic Science and Technology of China\\ 
\texttt{maxwellquadyang@gmail.com}, \texttt{ynylincoln@sjtu.edu.cn}\\
}

\definecolor{top1}{RGB}{212,228,251}
\definecolor{HighLight}{RGB}{212,228,251}
\begin{document}

\maketitle

\begin{abstract}
    Vision-language models enable OOD detection by comparing image alignment with ID labels and negative semantics. Existing negative-label-based methods mainly rely on static negative labels constructed before inference, limiting their ability to cover diverse and evolving OOD concepts. Although test-time expansion provides a natural solution, naively learning negative semantics from potential OOD samples may introduce hard ID contamination. To address this issue, we propose a \textbf{T}est-time \textbf{I}D-prototype-separated \textbf{N}egative \textbf{S}emantics learning method, termed \textbf{TINS}. TINS learns sample-specific negative text embeddings via image-to-text modality inversion and introduces ID-prototype-separated regularization to keep them separated from ID semantics. To further stabilize negative semantics expansion, TINS employs group-wise aggregation scoring and a buffer update strategy. Extensive experiments across Four-OOD, OpenOOD, Temporal-shift, and Various ID settings show consistent improvements over strong baselines. Notably, on the Four-OOD benchmark with ImageNet-1K as ID, TINS reduces the average FPR95 from 14.04\% to 6.72\%. Our code is available at \url{https://github.com/zxk1212/tins}.
    \end{abstract}

\section{Introduction}
Out-of-distribution (OOD) detection \cite{hendrycks17baseline,lee2018simple,liu2020energy} is crucial for deploying vision-language model (VLM)-based classifiers in open-world scenarios. Given only the names of in-distribution (ID) classes, such models are required to classify ID samples while rejecting inputs from unknown OOD categories. This ability is especially important in safety-critical domains, including autonomous driving \cite{wang2025application,asad2025towards} and medical diagnosis \cite{vaya2020bimcv}. 

Recent VLM-based methods tackle this problem by introducing negative labels outside the ID label space \cite{jiang2024negative,chen2024conjugated}, allowing OOD inputs to be detected when they show stronger alignment with negative semantics than with ID semantics. However, these VLM-based methods mainly depend on a static negative-label set constructed before inference. Although effective, these methods rely on static negative semantics that cannot anticipate diverse and evolving OOD concepts during deployment. Since test-time OOD samples naturally reflect the current OOD distribution, a promising direction is to learn negative semantics dynamically at test time.

Nevertheless, recent test-time dynamic methods such as AdaNeg \cite{zhang2024adaneg}, OODD \cite{yang2025oodd}, and InterNeg \cite{xu2026mind} largely overlook an important challenge: 

\textit{
    The unlabeled test stream may include hard ID samples that are easily confused with OOD inputs. 
}

% As a result, these test-time methods may contaminate the negative space with ID-related semantics when they directly expand negative semantics from potential OOD samples, because such samples can contain hard ID instances that are often mistakenly identified as OOD.

As a result, directly expanding negative semantics from potential OOD samples may contaminate the negative space with ID-related semantics.
As shown in Figure~\ref{fig:sun_comparison}, dynamically learned negative features can complement static negative features by aligning more closely with real OOD samples. However, without proper constraints, the resulting ID/OOD score distributions may become confused in the ID-tail region. This suggests that effective test-time dynamic negative learning should be both OOD-adaptive and ID-prototype-separated.

In this paper, we propose a \textbf{T}est-time \textbf{I}D-prototype-separated \textbf{N}egative \textbf{S}emantics learning method, termed \textbf{TINS}. TINS learns sample-specific negative text embeddings from test-time potential OOD samples via image-to-text modality inversion, while imposing ID-prototype-separated regularization to keep the learned embeddings away from ID prototypes. In this way, the negative space can adapt to emerging OOD semantics while mitigating contamination from hard ID samples. We further stabilize the dynamically expanded negative space with group-wise aggregation scoring and a buffer update strategy, enabling robust adaptation to evolving OOD distribution.
Our contributions are summarized as follows:
\begin{itemize}[leftmargin=*,topsep=4pt,itemsep=4pt,parsep=0pt]
    \item We propose ID-prototype-separated regularization to keep learned negative semantics away from ID prototypes, effectively mitigating hard ID contamination.
    
    \item We introduce group-wise aggregation scoring and a buffer update strategy to further stabilize the expansion of negative semantics at test time.
    
    \item Our TINS demonstrates consistent improvements over recent strong baselines across Four-OOD, OpenOOD, Temporal-shift, and Various ID settings. Specifically, on the Four-OOD benchmark, TINS substantially reduces the average FPR95 from 14.04\% to 6.72\% compared with InterNeg.
\end{itemize}

\begin{figure}[t]
    \centering
    \begin{subfigure}[b]{0.32\textwidth}
        \centering
        \includegraphics[width=\textwidth]{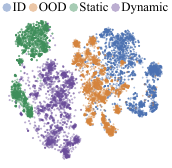}
        \caption{t-SNE visualization}
        \label{fig:tsne_SUN_reg}
    \end{subfigure}
    \hfill
    \begin{subfigure}[b]{0.32\textwidth}
        \centering
        \includegraphics[width=\textwidth]{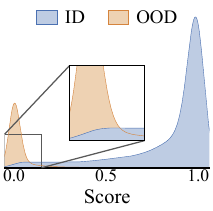}
        \caption{InterNeg}
        \label{fig:id_ood_SUN_wo}
    \end{subfigure}
    \hfill
    \begin{subfigure}[b]{0.32\textwidth}
        \centering
        \includegraphics[width=\textwidth]{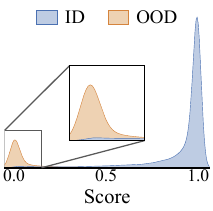}
        \caption{Ours}
        \label{fig:id_ood_SUN_w}
    \end{subfigure}
    \vspace{-1mm}
    \caption{
    Comparison of ID/OOD feature visualizations and distributions using ImageNet-1K as ID and SUN as OOD:
    (a) t-SNE visualization comparing static negative features (\textcolor[HTML]{3E8E5A}{green}) and dynamically learned negative features (\textcolor[HTML]{6B4C9A}{purple}).
    (b) Score distributions of InterNeg~\cite{xu2026mind}, where the lack of proper constraints leads to overlap between ID-tail and OOD samples.
    (c) Score distributions of our proposed TINS, showing clearer ID/OOD separation.
    }
    \label{fig:sun_comparison}
    \vspace{-0.5cm}
\end{figure}

\section{Related Work}
\noindent \textbf{Traditional visual OOD detection.} Traditional visual OOD detection methods mainly rely on supervised ID training and visual feature modeling, and can be broadly grouped into three categories: (1) score-based methods \cite{hendrycks17baseline, lee2018simple, liang2018enhancing, liu2020energy, huang2021mos, wang2022vim}, which design confidence, energy, or logit-space scores to separate ID and OOD samples; (2) distance-based methods \cite{ming2022hyperspherical, zaeemzadeh2021union}, which detect OOD samples by measuring their feature-space discrepancy from ID references, such as class prototypes \cite{lee2018simple, sehwag2021ssd} or nearest ID neighbors \cite{sun2022out, du2022siren, ming2022hyperspherical}; (3) generative-based methods \cite{ryu2018out, kong2021opengan}, which model the ID distribution and regard samples with low likelihood or large reconstruction errors as OOD. 
 % These classical approaches establish the basic detection landscape, but they mostly operate within the visual modality and have limited access to external semantic knowledge.

 \noindent \textbf{OOD detection leveraging pre-trained VLMs.}
With the development of pretrained vision-language models (VLMs), incorporating textual semantics and cross-modal alignment into visual OOD detection has emerged as a new paradigm. Early methods such as LoCoOp \cite{locoop2023} introduce prompt learning to improve few-shot OOD detection by learning ID prompts while exploiting background cues. LSN \cite{nie2024lsn}, ID-Like \cite{bai2024idlike}, and LAPT \cite{zhang2024lapt} further strengthen ID/OOD separability by constructing negative prompts from generated data, auxiliary samples, or local crops. In parallel, training-free VLM-based methods directly use CLIP-like alignment scores for detection: MCM \cite{ming2022delving} evaluates whether an image aligns with the ID label space; EOE~\cite{cao2024eoe} leverages LLMs to generate potential outlier class labels without accessing real OOD samples; NegLabel \cite{jiang2024negative} mines negative labels from external corpora; and CSP~\cite{chen2024conjugated} further enhances negative-label semantics based on NegLabel.

% Earlier method 例如AdaNeg \cite{zhang2024adaneg} and OODD \cite{yang2025oodd} mainly 主要是基于无优化方法的测试时适应方法through 保存测试时潜在的OOD样本的视觉feature。在测试时从词库挖掘潜在的OOD label。 ANTS在测试时从历史样本中动态挖掘疑似 OOD 图像和与其相似的 ID 类别，然后利用外部的 MLLM 的图像理解与推理能力 生成面向 far-OOD和 near-OOD两类negative labels。 InterNeg \cite{xu2026mind}用与 CLIP 训练目标一致的图文跨模态距离来选择和动态生成 negative texts，而不是依赖文本-文本或图像-图像的单模态距离，从而提升 VLM-based OOD detection 的 ID/OOD 区分能力。Different from them, our method xxx
\noindent \textbf{Test-time adaptation methods for OOD detection.}
Recent studies \cite{zhang2024adaneg,yang2025oodd,zhu2025ants,xu2026mind,zhang2026activation} introduce test-time adaptation to OOD detection.
Early methods such as AdaNeg~\cite{zhang2024adaneg} and OODD~\cite{yang2025oodd} mainly follow an optimization-free adaptation paradigm, where potential OOD visual features are cached at test time and update negative proxies.
ANTS~\cite{zhu2025ants} further exploits historical test samples to identify suspicious OOD images and their visually similar ID categories, and leverages the image understanding ability of external MLLMs to generate negative labels for both Far-OOD and Near-OOD.
InterNeg~\cite{xu2026mind} selects and dynamically generates negative labels based on cross-modal image-text distances that are consistent with the CLIP training objective, rather than relying on uni-modal text-text or image-image distances, thereby improving the ID/OOD separability of VLM-based OOD detection.
In contrast, our method learns sample-specific negative text embeddings directly from online test data without external MLLMs, and regularizes them to stay away from ID prototypes.

\section{Methodology}
\subsection{Preliminaries}
\textbf{OOD Detection.} Let $\mathcal{X}$ denote the image space and $\mathcal{Y}^+ = \{ y_1, \dots, y_C \}$ denote the set of ID class labels, where each label is a text token or phrase, e.g., $\mathcal{Y}^+ = \{ \text{cat}, \text{dog}, \dots, \text{bird} \}$. We use $\mathcal{P}_{\mathrm{in}}$ and $\mathcal{P}_{\mathrm{ood}}$ to denote the ID and OOD on $\mathcal{X}$, respectively.
In standard closed-set classification, a test image $x$ is assumed to be drawn from $\mathcal{P}_{\mathrm{in}}$ and assigned a label $y \in \mathcal{Y}^+$. In open-world settings, however, the model may encounter an image $x$ from $\mathcal{P}_{\mathrm{ood}}$, whose labels lie outside $\mathcal{Y}^+$. Such inputs are often forced into one of the known classes, which can lead to unreliable decisions.
OOD detection addresses this issue by retaining the $C$-way ID classifier while introducing an OOD score function $S$ \cite{lee2018simple,liang2018enhancing,liu2020energy} to separate ID and OOD inputs:
\begin{equation}
    G_{\gamma}(x) =
    \begin{cases} 
    \text{ID,} & \text{if } S(x) \geq \gamma; \\
    \text{OOD,} & \text{otherwise,}
    \end{cases}
\end{equation}
where $G_{\gamma}$ is the OOD detector with threshold $\gamma \in \mathbb{R}$, and a larger value of $S$ indicates stronger evidence that $x$ belongs to ID.

\noindent \textbf{CLIP and NegLabel.}
CLIP \cite{radford2021learning} consists of a text encoder $\mathcal{T}(\cdot)$ using the Transformer \cite{vaswani2017attention} architecture and an image encoder $\mathcal{I}(\cdot)$ using the ViT \cite{dosovitskiyimage} or ResNet \cite{he2016deep} architecture. Given a test image $x$, we obtain an image feature $\boldsymbol{v} = \mathcal{I}(x) \in \mathbb{R}^d$ and text features $\boldsymbol{t}_{y_i} = \mathcal{T}(\mathcal{E}(\mathrm{prompt}(y_i))) \in \mathbb{R}^{d}$ for labels $y_i \in \mathcal{Y}^+$, where $\mathrm{prompt}(\cdot)$ represents the prompt template applied to an input label ( e.g., ``a photo of [CLS]''), $\mathcal{E}(\cdot)$ is the word embedding function and $d$ is the feature dimension. Both $\boldsymbol{v}$ and $\boldsymbol{t}_{y_i}$ are $L_2$-normalized. The zero-shot probability of class $y_i$ is then computed as
$
    p_i = \frac{\exp(\boldsymbol{v}^{\top}\boldsymbol{t}_{y_i} / \tau)}{\sum_{j=1}^{C} \exp(\boldsymbol{v}^{\top}\boldsymbol{t}_{y_j} / \tau)},
$
where $\tau > 0$ is the temperature scaling factor.
CLIP has recently been extended to OOD detection \cite{ming2022delving,jiang2024negative,zhang2024lapt}. In particular, Jiang \emph{et al.} \cite{jiang2024negative} construct a set of negative labels $\mathcal{Y}^-$ by selecting labels that are far from the ID data.
Following InterNeg \cite{xu2026mind}, we select $L$ negative labels from the WordNet \cite{miller1995wordnet} corpus $\mathcal{Y}^{\mathrm{cor}}$ based on class-wise image prototypes, forming the negative label set $\mathcal{Y}^- = \{ y^-_{1},y^-_{2}, \dots, y^-_{L}\}$. Specifically, for each ID class $c$, we randomly sample $N$ images $\{x_{cj}\}_{j=1}^{N}$ from the training set (referred to as $N$-shot), where $x_{cj}$ denotes the $j$-th sampled image of class $c$, and compute a class-wise image prototype via the CLIP image encoder:
\begin{equation} \label{equ:class_prototype}
    \boldsymbol{\mu}_c = \frac{1}{N}\sum_{j=1}^{N} \mathcal{I}(x_{cj}), \quad c = 1, 2,\dots, C.
\end{equation}
Then, for each candidate word $y^- \in \mathcal{Y}^{\mathrm{cor}}$, we obtain its text embedding $\boldsymbol{t}_{y^-} = \mathcal{T}(\mathcal{E}(\mathrm{prompt}(y^-)))$ and quantify its semantic distance to the ID visual space by the mean cosine distance to all class-wise image prototypes:
$ \label{equ:neg_distance}
    d(y^-) = \frac{1}{C} \sum_{c=1}^{C} \left(1 - \cos\left(\boldsymbol{t}_{y^-}, \boldsymbol{\mu}_c\right)\right).
$
The negative label set $\mathcal{Y}^- = \{ y^-_{1},y^-_{2}, \dots, y^-_{L}\}$ is subsequently formed by selecting the top-$L$ candidates with the largest distance.
By this construction, each selected negative label is far from the ID visual space in the inter-modal sense. Accordingly, an image input is considered more likely to be ID when it is more similar to ID labels than to these negative labels, yielding the score function:
\begin{equation} \label{equ:neglabel_score}
    S(\boldsymbol{v}) = \frac{\sum_{i=1}^{C} \exp(\boldsymbol{v}^{\top}\boldsymbol{t}_{y_i} / \tau)}{\sum_{j=1}^{C} \exp(\boldsymbol{v}^{\top}\boldsymbol{t}_{y_j} / \tau) + \sum_{j=1}^{L} \exp(\boldsymbol{v}^{\top}\boldsymbol{t}_{y_j^-} / \tau)},
\end{equation}
where $\boldsymbol{t}_{y_j^-} = \mathcal{T}(\mathcal{E}(\mathrm{prompt}(y_j^-))) \in \mathbb{R}^{d}$ is the text feature of the negative label $y_j^-$.

\subsection{Motivation}
\noindent \textbf{Controlled test-time expansion.} Although static negative labels are effective, their semantics are fixed once constructed and cannot adapt to the unknown OOD concepts encountered during deployment. This motivates learning additional negative semantics from potential OOD samples at test time. However, naively expanding the negative bank at test time introduces several challenges: \ding{182} \textbf{Hard ID contamination.} The selection of potential OOD samples is inevitably noisy. Some hard ID samples may receive low score and be selected for expansion. Directly deriving negative semantics from these hard ID samples can contaminate the negative bank with ID-aligned semantics, thereby blurring the boundary between ID and OOD. \ding{183} \textbf{Denominator inflation.} As more test-time learned negatives accumulate in the denominator of Eq. \ref{equ:neglabel_score}, the overall score $S(\boldsymbol{v})$ tends to decrease. In other words, test-time expansion of the negative bank introduces a bank-size effect, which can reduce ID scores and increase false positives.
\ding{184} \textbf{OOD semantic shift.} In streaming data environments, the distribution of OOD data may shift drastically over time, resulting in a mismatch between the negative semantics stored in the bank and those required by the current OOD data.

These challenges motivate a controlled test-time expansion of NegLabel that is both adaptive and stable. Our goal is to exploit potential OOD samples from the current test stream to compensate for the coverage gap of static negative labels, while avoiding hard ID contamination, uncontrolled bank growth, and preserving robustness to OOD semantic drift. To this end, our method is built upon three key components:
\begin{enumerate}[leftmargin=*,topsep=4pt,itemsep=4pt,parsep=0pt,label=\ding{182},ref=\ding{182}]
    \item \textbf{ID-prototype-separated modality inversion} to keep test-time derived negative semantics away from the ID semantic region;
    \item[\ding{183}] \textbf{Group-wise aggregation scoring} to reduce the adverse effect of denominator expansion;
    \item[\ding{184}] \textbf{Buffer update strategy} to maintain diversity and improve robustness to OOD semantic shift.
\end{enumerate}

\subsection{ID-prototype-separated Modality Inversion} \label{sec:reg}

In this section, we introduce a dynamic negative semantics learning framework for potential OOD samples, with an ID-prototype-separated regularization to prevent the learned negative semantics from drifting toward the ID semantic space.

Given a test image $x$, if its OOD score satisfies $S(x)<\beta$, we regard $x$ as a \textbf{potential OOD sample} and learn the negative semantics from it. Since directly learning a discrete textual label is intractable, we instead optimize a negative text embedding via modality inversion. Specifically, we freeze the CLIP model and perform prompt tuning with the template \texttt{`a photo of <CLS>'}, where \texttt{<CLS>} is a pseudo-token with learnable embedding $\boldsymbol{z}$, to map the visual semantics of $x$ into the textual space.
Then, feeding this prompt into the text encoder gives the text feature $\boldsymbol{t}^-$:
\begin{equation}
\boldsymbol{t}^- = \mathcal{T}([\mathcal{E}(\text{``a photo of''}), \boldsymbol{z}]) \in \mathbb{R}^d.
\end{equation}
Previous modality inversion \cite{mistretta2025cross,xu2026mind} optimizes the pseudo-token embedding by simply pulling the learned text feature toward the image feature of the potential OOD sample:
\begin{equation}
\mathcal{L}_{\mathrm{inv}}(\boldsymbol{t}^-, \boldsymbol{v})
=
1-\cos(\boldsymbol{t}^-, \boldsymbol{v}),
\end{equation}
where $\boldsymbol{v}$ denotes the image feature of $x$. However, such a direct inversion objective 
is not sufficiently constrained for dynamic negative semantics expansion. When $x$ is far from the 
ID semantic space, the learned negative feature should naturally align well with its image feature. 
In contrast, when $x$ lies close to the ID semantic space, enforcing strong image-text alignment may 
introduce hard-ID noise into the negative bank.
To address this issue, we optimize $\boldsymbol{z}$ by jointly encouraging OOD alignment and ID-prototype separation:
\begin{equation}
\mathcal{L}_{\mathrm{ours}}(\boldsymbol{t}^-, \boldsymbol{v}) =
1-\cos(\boldsymbol{t}^-, \boldsymbol{v})
+ \lambda \cdot \frac{1}{C}\sum_{c=1}^C \left(1+\cos(\boldsymbol{t}^-, \boldsymbol{\mu}_c)\right),
\label{Eq:our_loss}
\end{equation}
where $\boldsymbol{\mu}_c$ is the prototype of class $c$, and $\lambda$ balances OOD alignment and ID separation. To preserve the prompt prior of CLIP and stabilize optimization, we update only the pseudo-token embedding $\boldsymbol{z}$ while keeping all model parameters and other prompt tokens frozen.

Additionally, we consider two initialization strategies for $\boldsymbol{z}$:

\textbf{(1) \textit{Random initialization.}}
Following \cite{mistretta2025cross,xu2026mind}, we initialize $\boldsymbol{z}$ by sampling from $\mathcal{N}(0, \sigma^2 \mathbf{I})$. This strategy is free from any predefined semantic bias and is therefore more flexible, but our subsequent experiments show that it converges more slowly.

\textbf{(2) \textit{Vocabulary-prior initialization.}}
In this paper, we propose a vocabulary-prior initialization strategy, which initializes $\boldsymbol{z}$ from a corpus word that already provides a favorable trade-off between matching the current sample and staying away from ID prototypes. For each candidate word $y^- $ from the previously constructed negative-label corpus $\mathcal{Y}^-$, we compute the text feature $\boldsymbol{t}_{y^-}$:
\begin{equation}
\boldsymbol{t}_{y^-} = \mathcal{T}(\mathcal{E}(\mathrm{prompt}(y^-))),
\end{equation}
where $\mathrm{prompt}(y^-)$ denotes the prompt template instantiated with $y^-$. We then select
$
y^* = \arg\min_{y^- \in \mathcal{Y}^{-}} \mathcal{L}_{\mathrm{ours}}(\boldsymbol{t}_{y^-}, \boldsymbol{v}),
$
and use the token embedding corresponding to $y^*$ to initialize $\boldsymbol{z}$. This strategy provides a better semantic starting point and empirically leads to faster convergence.
\begin{figure}[t]
    \centering
    \includegraphics[width=\linewidth]{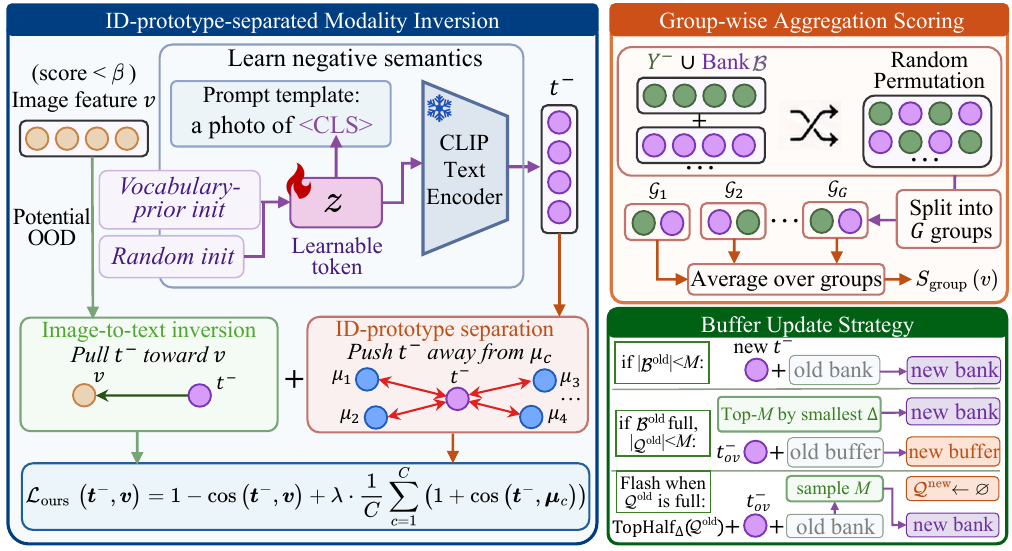}
    \caption{The overall framework of TINS, which learns adaptive negative semantics at test time via ID-prototype-separated modality inversion. The learned negatives are maintained by a dynamic bank and buffer, and combined with static negatives for group-wise aggregation scoring.}
    \label{fig:overview}
\end{figure}

\noindent\textbf{ID-prototype-separated criterion.}
After obtaining the learned negative text feature $\boldsymbol{t}^-$, we filter out ID-like candidates by retaining $\boldsymbol{t}^-$ only if
$
\cos(\boldsymbol{t}^- , \boldsymbol{\mu}_c) < \cos(\boldsymbol{t}_{y_c}, \boldsymbol{\mu}_c),
 \forall c \in \{1,2,\dots,C\},
$
where $\boldsymbol{t}_{y_c}$ denotes the text feature of class $c$. This criterion ensures that the learned negative text feature is less aligned with each ID prototype than its corresponding class text feature.
Furthermore, inspired by OODD \cite{yang2025oodd}, we constrain the capacity of the dynamic negative bank $\mathcal{B}$ to retain only the most ID-prototype-separated learned negative text features. Specifically, we cap the bank size at $M$ and rank each candidate using the same ID-prototype separation term as in our regularization objective,
$
\Delta = \frac{1}{C}\sum_{c=1}^C \left(1+\cos(\boldsymbol{t}^-, \boldsymbol{\mu}_c)\right)
$.
A smaller $\Delta$ indicates stronger separation from ID semantics and thus the candidate is given higher priority to be kept in the dynamic negative bank $\mathcal{B}$.

\subsection{Group-wise Aggregation Scoring}
To mitigate denominator inflation of Eq. \ref{equ:neglabel_score}, we do not place all negative features into a single denominator. Instead, we partition the union of the static negative label set $\mathcal{Y}^-$ and the dynamic negative bank into $G$ disjoint groups of equal size, denoted by $\{\mathcal{G}_g\}_{g=1}^{G}$. For each group, we compute a group-wise OOD score and then average these scores:
\begin{equation}
S_{\mathrm{group}}(\boldsymbol{v})
=
\frac{1}{G}
\sum_{g=1}^{G}
\frac{
\sum_{i=1}^{C}\exp(\boldsymbol{v}^{\top}\boldsymbol{t}_i/\tau)
}{
\sum_{i=1}^{C}\exp(\boldsymbol{v}^{\top}\boldsymbol{t}_i/\tau)
+
C \cdot \frac{1}{|\mathcal{G}_g|}
\sum_{\boldsymbol{t}^- \in \mathcal{G}_g}
\exp(\boldsymbol{v}^{\top}\boldsymbol{t}^-/\tau)
},
\label{Eq:group_score}
\end{equation}
where the factor $\frac{1}{|\mathcal{G}_g|}$ normalizes the contribution of each negative group. Its role is to prevent the negative term from increasing simply due to the growing number of negative features when the dynamic negative bank expands from empty to its maximum capacity $M$. Consequently, the resulting score is less affected by the bank size. The additional scaling factor $C$ further balances the magnitude of the positive and negative terms.

\noindent\textbf{Random permutation.}
Before grouping, we randomly permute the order of whole negative features. This simple strategy reduces ordering bias and encourages semantically diverse negatives to be distributed across groups. The following analysis provides theoretical intuition for the benefit of balanced grouping, and our ablation study confirms that random permutation is empirically effective.

\noindent\textbf{Theoretical insight.} We further provide a theoretical analysis showing that, under a fixed total negative activation, more balanced grouping yields a lower group-wise aggregation score. This makes OOD samples more distinguishable from ID samples in the test stream. Let
$
P = \sum_{i=1}^{C}\exp(\boldsymbol{v}^{\top}\boldsymbol{t}_i/\tau)>0
$
denote the total positive activation, and define the normalized negative activation of group $g$ as
$
A_g =
C \cdot \frac{1}{|\mathcal{G}_g|}
\sum_{\boldsymbol{t}^- \in \mathcal{G}_g}
\exp(\boldsymbol{v}^{\top}\boldsymbol{t}^-/\tau).
$
Then the group-wise aggregation score can be written as
$
S_{\mathrm{group}} =
\frac{1}{G}
\sum_{g=1}^{G}
\frac{P}{P+A_g}.
$
\begin{tcolorbox}[colback=gray!10,%gray background
    colframe=black,% black frame color
    width=\linewidth,% Use 8cm total width,
    arc=1mm, auto outer arc,
    boxrule=0.5pt,
   ]
\begin{theorem}
    Consider two grouping strategies indexed by $k\in\{1,2\}$, each with the same number of groups $G$.
    Let $\mathbf{A}^{(k)}=(A_1^{(k)},A_2^{(k)},\ldots,A_G^{(k)})$ be the group-level negative activations under the grouping strategy $k$,
    and let $A_{[g]}^{(k)}$ denote the $g$-th largest entry of $\mathbf{A}^{(k)}$.
    Under the fixed-total negative activation, the two grouping strategies satisfy
    $
    \sum_{g=1}^{G}A_g^{(1)}=\sum_{g=1}^{G}A_g^{(2)}.
    $
    If $\mathbf{A}^{(2)}$ is more balanced than $\mathbf{A}^{(1)}$, i.e,
    $
    \sum_{g=1}^{m}A_{[g]}^{(2)}
    \le
    \sum_{g=1}^{m}A_{[g]}^{(1)},
    \forall m=1,2,\ldots,G-1,
    $
    then $
    \frac{1}{G}\sum_{g=1}^{G}\frac{P}{P+A_g^{(2)}}
    \le
    \frac{1}{G}\sum_{g=1}^{G}\frac{P}{P+A_g^{(1)}}$.
\label{thm:balanced_grouping}
\end{theorem}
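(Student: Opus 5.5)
This is a majorization statement, and I would derive it from Karamata's inequality applied to the convex function $f(a)=\frac{P}{P+a}$ on $[0,\infty)$. The hypotheses say exactly that $\mathbf{A}^{(1)}$ majorizes $\mathbf{A}^{(2)}$. The partial sums of the decreasingly sorted entries of $\mathbf{A}^{(2)}$ are dominated by those of $\mathbf{A}^{(1)}$ for $m=1,\dots,G-1$, and the totals agree, so we write $\mathbf{A}^{(2)}\prec\mathbf{A}^{(1)}$. Karamata (equivalently, Hardy--Littlewood--P\'olya) then gives $\sum_g f(A^{(2)}_g)\le\sum_g f(A^{(1)}_g)$ for every convex $f$. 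Dividing by $G$ yields the claim. Since the sums are symmetric in the entries, sorting does not change them, so it suffices to work with the sorted vectors.

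The first step is to check convexity. On $a>-P$, and in particular for all $A_g\ge 0$ (each $A_g$ is a positive multiple of a sum of exponentials), we have $f'(a)=-P/(P+a)^2$ and $f''(a)=2P/(P+a)^3>0$. So $f$ is strictly convex and decreasing.

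To keep the argument self-contained, I would give the Abel-summation proof of Karamata rather than just cite it. Write $x_g=A^{(1)}_{[g]}$ and $y_g=A^{(2)}_{[g]}$, both nonincreasing in $g$. Define the slope
\[
c_g=\frac{f(x_g)-f(y_g)}{x_g-y_g}\quad(x_g\neq y_g),
\]
and set $c_g=f'(x_g)$ when $x_g=y_g$. Convexity makes slopes of chords nondecreasing in both endpoints, so $c_g$ is nonincreasing in $g$. Now let $X_m=\sum_{g\le m}x_g$ and $Y_m=\sum_{g\le m}y_g$, with $X_0=Y_0=0$ and $X_G=Y_G$. Summation by parts gives
\[
\sum_{g=1}^G\bigl(f(x_g)-f(y_g)\bigr)=\sum_{g=1}^G c_g\bigl[(X_g-Y_g)-(X_{g-1}-Y_{g-1})\bigr]=\sum_{m=1}^{G-1}(c_m-c_{m+1})(X_m-Y_m)+c_G(X_G-Y_G).
\]
In the middle sum, each term is a product of two nonnegative factors: $c_m-c_{m+1}\ge 0$ by monotonicity of the slopes, and $X_m-Y_m\ge 0$ by the hypothesis. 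The last term vanishes because the totals are equal. Hence $\sum_g f(A^{(1)}_g)\ge\sum_g f(A^{(2)}_g)$, which is the theorem.

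The step that needs the most care is showing that $c_g$ is nonincreasing. When $x_g=y_g$ the chord degenerates, and one must confirm that substituting the derivative keeps the slopes monotone. This follows from the three-chord lemma for convex functions together with the fact that both sequences are sorted in the same (nonincreasing) order. Everything else is bookkeeping. Finally, I would note the interpretation: a smaller $S_{\mathrm{group}}$ for OOD inputs under balanced grouping is exactly what the theorem asserts.
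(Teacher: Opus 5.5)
Your proof is correct and complete, but it follows a different route from the paper's. The paper proves only a two-point ``Robin Hood'' inequality: moving an amount $\delta\le (x-y)/2$ from a larger entry $x$ to a smaller entry $y$ cannot increase $\phi(x)+\phi(y)$, because $h'(t)=\phi'(y+t)-\phi'(x-t)\le 0$. It then asserts, with neither proof nor citation, that since $\mathbf{A}^{(2)}$ is majorized by $\mathbf{A}^{(1)}$, one can pass from $\mathbf{A}^{(1)}$ to $\mathbf{A}^{(2)}$ by finitely many such transfers. That assertion is the nontrivial structural part of the Hardy--Littlewood--P\'olya/Muirhead theory. It says that at most $G-1$ T-transforms suffice, and a T-transform that overshoots the midpoint reduces to the two-point case by the symmetry of $\phi(x)+\phi(y)$. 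So the paper's argument rests on an unproved decomposition lemma.

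Your Abel-summation proof of Karamata avoids that lemma entirely. It uses the hypotheses $X_m\ge Y_m$ and $X_G=Y_G$ exactly as stated. Its only analytic input is the monotonicity of chord slopes. For this $C^1$ convex $f$, that monotonicity follows at once, on and off the diagonal, from $s(a,b)=\int_0^1 f'(b+t(a-b))\,dt$, which also settles the degenerate case you flagged.

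In exchange, the paper's route is more interpretable: each elementary transfer from a heavier group to a lighter one visibly lowers the score. This matches the balanced-grouping motivation for random permutation.
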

\end{tcolorbox}

The proof of Theorem~\ref{thm:balanced_grouping} is provided in Appendix~\ref{sec:proof}.
Theorem~\ref{thm:balanced_grouping} shows that, for OOD samples, a more balanced grouping of negative features yields a lower group-wise aggregation score under a fixed total negative activation. For ID samples, the proposed ID-prototype-separated regularization helps keep the group-level negative activations $A_g$ relatively small, thereby preserving a high group-wise aggregation score. This contrast further enlarges the separation between ID and OOD samples.

\subsection{Buffer Update Strategy} \label{sec:buffer_update}

Another challenge of the dynamic negative bank is OOD semantic shift. For example, negative labels learned from earlier test samples are effective for the initial OOD semantics, but their relevance may gradually decrease as the distribution of OOD data changes over time. To adapt to newly emerging OOD patterns while retaining useful historical semantics, we introduce a buffer $\mathcal{Q}$ with maximum capacity $M$. The buffer stores candidates displaced from the dynamic bank when it is full, but does not directly participate in the computation of $S_{\mathrm{group}}$.

After a newly learned negative text feature $\boldsymbol{t}^-$ satisfies the ID-prototype-separated criterion, it is used to update the dynamic negative bank $\mathcal{B}$. Let $(\mathcal{B}^{\mathrm{old}}, \mathcal{Q}^{\mathrm{old}})$ denote the bank and buffer before the update, and $(\mathcal{B}^{\mathrm{new}}, \mathcal{Q}^{\mathrm{new}})$ denote those after the update. The update rule is defined as
\begin{equation}
(\mathcal{B}^{\mathrm{new}}, \mathcal{Q}^{\mathrm{new}})
=
\begin{cases}
(\mathcal{B}^{\mathrm{old}} \cup \{\boldsymbol{t}^-\},\; \varnothing),
& \text{if } |\mathcal{B}^{\mathrm{old}}| < M, \\[4pt]
(\mathcal{B}^{\mathrm{top}\text{-}M},\; \mathcal{Q}^{\mathrm{old}} \cup \{\boldsymbol{t}^-_{\mathrm{ov}}\}),
& \text{if } |\mathcal{B}^{\mathrm{old}}| = M \text{ and } |\mathcal{Q}^{\mathrm{old}}| < M, \\[4pt]
\mathrm{Flash}(\mathcal{B}^{\mathrm{old}},\; \mathcal{Q}^{\mathrm{old}} \cup \{\boldsymbol{t}^-_{\mathrm{ov}}\}),
& \text{if } |\mathcal{B}^{\mathrm{old}}| = M \text{ and } |\mathcal{Q}^{\mathrm{old}}| = M,
\end{cases}
\label{Eq:buffer_update}
\end{equation}
where $\mathcal{B}^{\mathrm{top}\text{-}M}$ is obtained by selecting the $M$ candidates with the smallest ID-similarity scores $\Delta$ from $\mathcal{B}^{\mathrm{old}} \cup \{\boldsymbol{t}^-\}$. Since $\mathcal{B}^{\mathrm{old}}$ already contains $M$ candidates, this top-$M$ selection leaves exactly one overflow candidate outside the bank, which we denote as
$
\boldsymbol{t}^{-}_{\mathrm{ov}}
=
\left(\mathcal{B}^{\mathrm{old}} \cup \{\boldsymbol{t}^{-}\}\right)
\setminus
\mathcal{B}^{\mathrm{top}\text{-}M}
$.
When the buffer reaches its maximum capacity $M$, we trigger the $\mathrm{Flash}$ operation:
\begin{equation}
\mathcal{B}^{\mathrm{new}}
\leftarrow
\mathrm{RandSample}
\left(
\mathcal{B}^{\mathrm{old}}
\cup
\mathrm{TopHalf}_{\Delta}(\mathcal{Q}^{\mathrm{old}} \cup \{\boldsymbol{t}^-_{\mathrm{ov}}\}),
M
\right),
\quad
\mathcal{Q}^{\mathrm{new}}
\leftarrow
\varnothing.
\label{Eq:merge_update}
\end{equation}
Specifically, we select the half of the buffered candidates with the smallest $\Delta$ values, and then merge these high-quality buffered candidates with the before-update bank $\mathcal{B}^{\mathrm{old}}$. To maintain the quality of the bank while preventing it from becoming overly concentrated around a narrow subset of negative semantics, we randomly sample $M$ entries from the merged set to form the updated bank instead of always keeping the most ID-prototype-separated candidates. The buffer is then cleared, i.e., $\mathcal{Q}^{\mathrm{new}} \leftarrow \varnothing$.

An overview of our method is shown in Figure \ref{fig:overview} and summarized in Algorithm \ref{alg:tins} of Appendix \ref{sec:pseudo_code}.

\begin{table*}[!b]
    \small
    \centering
    \caption{OOD detection on Four-OOD with ImageNet-1K as ID using CLIP ViT-B/16. $\uparrow$ / $\downarrow$ indicate higher / lower is better. Values are percentages; best values are in \textbf{bold}.}
    \label{tab:ood_results}
    \begin{adjustbox}{width=\textwidth}
    \begin{tabular}{lcccccccccc}
    \toprule
    \multicolumn{1}{l|}{\multirow{3}{*}{Methods}} & \multicolumn{8}{c|}{OOD Datasets}                                                                                                                                          & \multicolumn{2}{c}{\multirow{2}{*}{Average}} \\
    \multicolumn{1}{l|}{}                         & \multicolumn{2}{c}{iNaturalist}     & \multicolumn{2}{c}{SUN}             & \multicolumn{2}{c}{Places}          & \multicolumn{2}{c|}{Textures}                            & \multicolumn{2}{c}{}                         \\ \cmidrule{2-11} 
    \multicolumn{1}{l|}{}                         & AUROC$\uparrow$ & FPR95$\downarrow$ & AUROC$\uparrow$ & FPR95$\downarrow$ & AUROC$\uparrow$ & FPR95$\downarrow$ & AUROC$\uparrow$ & \multicolumn{1}{c|}{FPR95$\downarrow$} & AUROC$\uparrow$      & FPR95$\downarrow$     \\ \midrule
    \multicolumn{11}{c}{\textbf{Visual-based Methods (requiring training on ID or extra data)}}    \\
    \multicolumn{1}{l|}{MSP \cite{hendrycks17baseline}}                      & 87.44           & 58.36             & 79.73           & 73.72             & 79.67           & 74.41             & 79.69           & \multicolumn{1}{c|}{71.93}            & 81.63                & 69.61                 \\
    \multicolumn{1}{l|}{ODIN \cite{liang2018enhancing}}                    & 94.65           & 30.22             & 87.17           & 54.04             & 85.54           & 55.06             & 87.85           & \multicolumn{1}{c|}{51.67}             & 88.80                & 47.75                 \\
    \multicolumn{1}{l|}{Energy \cite{liu2020energy}}                  & 95.33           & 26.12             & 92.66           & 35.97             & 91.41           & 39.87             & 86.76           & \multicolumn{1}{c|}{57.61}             & 91.54                & 39.89                 \\
    \multicolumn{1}{l|}{GradNorm \cite{huang2021importance}}                 & 72.56           & 81.50             & 72.86           & 82.00             & 73.70           & 80.41             & 70.26           & \multicolumn{1}{c|}{79.36}             & 72.35                & 80.82                 \\
    \multicolumn{1}{l|}{ViM \cite{wang2022vim}}                      & 93.16           & 32.19             & 87.19           & 54.01             & 83.75           & 60.67             & 87.18           & \multicolumn{1}{c|}{53.94}             & 87.82                & 50.20                 \\
    \multicolumn{1}{l|}{KNN \cite{sun2022out}}   & 94.52           & 29.17             & 92.67           & 35.62             & 91.02           & 39.61             & 85.67           & \multicolumn{1}{c|}{64.35}             & 90.97                & 42.19                 \\
    \multicolumn{1}{l|}{VOS \cite{du2022vos}}                      & 94.62           & 28.99             & 92.57           & 36.88             & 91.23           & 38.39             & 86.33           & \multicolumn{1}{c|}{61.02}             & 91.19                & 41.32                 \\ \midrule
    \multicolumn{11}{c}{\textbf{VLM-based Methods (requiring training on ID or extra data)}}    \\
    
    \multicolumn{1}{l|}{LoCoOp \cite{locoop2023}}                  & 96.86           & 16.05             & 95.07           & 23.44             & 91.98           & 32.87             & 90.19           & \multicolumn{1}{c|}{42.28}             & 93.52                & 28.66                 \\
    \multicolumn{1}{l|}{LSN \cite{nie2024lsn}}                      & 95.83           & 21.56             & 94.35           & 26.32             & 91.25           & 34.48             & 90.42           & \multicolumn{1}{c|}{38.54}           & 92.96                & 30.22                 \\
    \multicolumn{1}{l|}{ID-Like \cite{bai2024idlike}}             & 98.19           & 8.98             & 91.64           & 42.03             & 90.57           & 44.00            & 94.32           & \multicolumn{1}{c|}{\underline{25.27}}             & 93.68                & 30.07                 \\ 
    \midrule
    \multicolumn{11}{c}{\textbf{VLM-based Methods (no training on ID or extra data)}}    \\
    
    % \multicolumn{1}{l|}{Mahalanobis \cite{lee2018simple}}              & 55.89           & 99.33             & 59.94           & 99.41             & 65.96           & 98.54             & 64.23           & \multicolumn{1}{c|}{98.46}             & 61.50                & 98.94                 \\
    \multicolumn{1}{l|}{Energy \cite{liu2020energy}}                   & 85.09           & 81.08             & 84.24           & 79.02             & 83.38           & 75.08             & 65.56           & \multicolumn{1}{c|}{93.65}             & 79.57                & 82.21                 \\
    \multicolumn{1}{l|}{MCM \cite{ming2022delving}}                      & 94.59           & 32.20             & 92.25           & 38.80             & 90.31           & 46.20             & 86.12           & \multicolumn{1}{c|}{58.50}             & 90.82                & 43.93                 \\
    \multicolumn{1}{l|}{EOE \cite{cao2024eoe}}                     & 97.52           & 12.29             & 95.73           & 20.40             & 92.95           & 30.16             & 85.64           & \multicolumn{1}{c|}{57.53}             & 92.96                & 30.09                 \\
    \multicolumn{1}{l|}{NegLabel \cite{     jiang2024negative}}                 & 99.49           & 1.91              & 95.49           & 20.53             & 91.64           & 35.59             & 90.22           & \multicolumn{1}{c|}{43.56}            & 94.21                & 25.40                 \\
    
    \multicolumn{1}{l|}{CLIPScope \cite{fu2025clipscope}}                     & 99.61           & 1.29             & 96.77           & 15.56             & 93.54           & 28.45             & 91.41           &\multicolumn{1}{c|}{38.37}            & 95.30                & 20.88                 \\
    % \multicolumn{1}{l|}{CoVer \cite{zhang2024cover}}                     & 95.98           & 22.55             & 93.42           & 32.85            & 90.27           & 40.71             & 90.14           & \multicolumn{1}{c|}{43.39}            & 92.45                & 34.88                 \\
    \multicolumn{1}{l|}{CSP \cite{chen2024conjugated}}                     & 99.60           & 1.54             & 96.66           & 13.66            & 92.90           & 29.32             & 93.86           & \multicolumn{1}{c|}{25.52}            & 95.76                & \underline{17.51}                 \\
    \midrule
    \multicolumn{11}{c}{\textbf{Test-time Adaptation Methods}}    \\
    \multicolumn{1}{l|}{AdaNeg \cite{zhang2024adaneg}}                   & \underline{99.71}           & \underline{0.59}              & \underline{97.44}           & \underline{9.50}              & \underline{94.55}           & 34.34             & \underline{94.93}           & \multicolumn{1}{c|}{31.27}             & \underline{96.66}                & 18.92                 \\
    \multicolumn{1}{l|}{OODD \cite{yang2025oodd}} & 99.79           & 0.85              & 97.17           & 12.94             & 92.51          & 30.68          & 94.51           & \multicolumn{1}{c|}{30.67}             & 96.00                & 18.79                 \\
    
    \multicolumn{1}{l|}{InterNeg \cite{xu2026mind}}         & 99.79           & 0.40              & 98.68           & 6.78             & 95.01          & \underline{27.11}          & 96.26           & \multicolumn{1}{c|}{21.85}             & 97.43                & 14.04                 \\
    \multicolumn{1}{l|}{\cellcolor{top1}\textbf{TINS}} & \cellcolor{top1}\textbf{99.93} & \cellcolor{top1}\textbf{0.21} & \cellcolor{top1}\textbf{99.14} & \cellcolor{top1}\textbf{3.84} & \cellcolor{top1}\textbf{97.14} & \cellcolor{top1}\textbf{12.73} & \cellcolor{top1}\textbf{97.83} & \multicolumn{1}{c|}{\cellcolor{top1}\textbf{10.09}} & \cellcolor{top1}\textbf{98.51} & \cellcolor{top1}\textbf{6.72} \\ \bottomrule
    \end{tabular}
    \end{adjustbox}
    \end{table*}

\section{Experiments}
\subsection{Setup}

\textbf{Datasets.} We mainly use ImageNet-1K~\cite{deng2009imagenet} as the ID dataset. Following common practice~\cite{ming2022delving,jiang2024negative,chen2024conjugated,xu2026mind}, we evaluate on Four-OOD benchmark~\cite{van2018inaturalist,xiao2010sun,zhou2017places,cimpoi2014describing} and additionally report results under the OpenOOD benchmark~\cite{zhang2023openood}. We also examine the generality of our method on diverse ID datasets, including Food-101 \cite{bossard2014food}, ImageNet-Sketch \cite{wang2019learning}, ImageNet-R \cite{hendrycks2021many}, and ImageNet-V2 \cite{recht2019imagenet}.

\noindent\textbf{Implementation Details.}
We use CLIP ViT-B/16~\cite{radford2021learning} as the default visual encoder and report alternative backbone results in Appendix \ref{sp:clip}. Following InterNeg~\cite{xu2026mind}, we use $L=2000$ static negative labels, a negative bank capacity of $M=2000$, a batch size of 256, and $N=16$ ID images per class. Based on the hyperparameter analysis on ImageNet-1K with the Four-OOD benchmark, we set $\tau=0.01$, $\beta=0.3$, and $\lambda=0.3$ in Eq.~\ref{Eq:our_loss}. For dynamic negative embedding learning, we adopt the vocabulary-prior initialization by default and optimize the embeddings with AdamW following~\cite{mistretta2025cross,xu2026mind}, using a learning rate of $2\times10^{-2}$, a weight decay of $1\times10^{-2}$, and 30 iterations to balance performance and efficiency. These hyperparameters are fixed unless otherwise specified. We report FPR95, AUROC, following standard evaluation protocols.

\subsection{Main Results}
\noindent \textbf{ImageNet-1K results on the Four-OOD benchmark.}
Table \ref{tab:ood_results} shows that our method clearly outperforms existing methods on Four-OOD with ImageNet-1K as ID. It achieves the best average AUROC of 98.51\% and the lowest average FPR95 of 6.72\%. Compared with InterNeg, the strongest baseline, our method reduces the average FPR95 by 7.32\% and improves the average AUROC by 1.08\%. The improvement is consistent across all OOD datasets, indicating that our dynamic negative-label expansion better captures OOD semantics.

\noindent \textbf{ImageNet-1K results under OpenOOD benchmark.}
As shown in Table \ref{tab:openood_imagenet_adaneg}, our method also performs strongly under the OpenOOD benchmark. Compared with InterNeg, it reduces the FPR95 from 65.43\% to 57.88\% on near-OOD data and from 16.96\% to 12.99\% on far-OOD data. These results confirm that our method remains effective for both challenging near-OOD samples and relatively easier far-OOD samples. We further provide CIFAR-10/100 results in Appendix \ref{sq:openood}, where our advantage remains consistent.

\begin{figure*}[!h]
    \centering

    % ================= Left: Table =================
    \begin{minipage}[t]{0.60\textwidth}
        \centering
        \captionof{table}{OOD detection results under OpenOOD benchmark, where ImageNet-1K is adopted as ID dataset. The detailed results of our method are available in Table \ref{tab:openood_imagenet_full}.}
        \resizebox{\textwidth}{!}{
        \begin{tabular}{l|cc|cc}
        \toprule
        \multirow{2}{*}{Methods} 
        & \multicolumn{2}{|c|}{FPR95 $\downarrow$} 
        & \multicolumn{2}{|c}{$\mathrm{AUROC} \uparrow$} \\
        \cline{2-5}
        & Near-OOD & Far-OOD & Near-OOD & Far-OOD \\
        \midrule
        \multicolumn{5}{c}{\textbf{Methods requiring training on ID or extra data}} \\
        AugMix \cite{hendrycks2019augmix} + ReAct \cite{sun2021react} 
        & -- & -- & 79.94 & 93.70 \\
        SCALE \cite{xu2023scaling} 
        & -- & -- & 81.36 & 96.53 \\
        AugMix \cite{hendrycks2019augmix} + ASH \cite{djurisic2022extremely}   
        & -- & -- & 82.16 & 96.05 \\
        LAPT \cite{zhang2024lapt} 
        & 58.94 & 24.86 & 82.63 & 94.26 \\
        \midrule
        \multicolumn{5}{c}{\textbf{Training-free \& non-adaptive methods}} \\
        MCM \cite{ming2022delving}        
        & 79.02 & 68.54 & 60.11 & 84.77 \\
        NegLabel \cite{jiang2024negative} 
        & 69.45 & 23.73 & 75.18 & 94.85 \\
        \multicolumn{5}{c}{\textbf{Test-time adaptation methods}} \\
        AdaNeg \cite{zhang2024adaneg} 
        & 67.51 & 17.31 & 76.70 & 96.43 \\
        InterNeg \cite{xu2026mind} 
        & 65.43 & 16.96 & 82.20 & 96.71 \\
        \rowcolor{HighLight}  
        \textbf{TINS}  
        & \textbf{57.88} & \textbf{12.99} & \textbf{82.65} & \textbf{96.92} \\
        \bottomrule
        \end{tabular}}
        \label{tab:openood_imagenet_adaneg}
    \end{minipage}
    \hfill
    % ================= Right: Figure =================
    \begin{minipage}[t]{0.39\textwidth}
        \centering
        \vspace{0pt}
        \includegraphics[width=\textwidth]{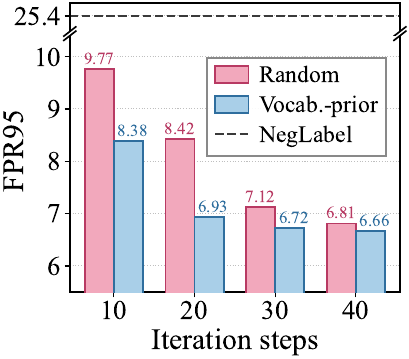}
        \vspace{-0.5cm}
        \captionof{figure}{Effect of two initialization strategies with different iteration steps.}
        \label{fig:interneg_iter_fpr95}
    \end{minipage}

    \vspace{-0.2cm}
\end{figure*}

\noindent\textbf{Various ID datasets.}
To further validate the robustness of our method, we evaluate it on various ID datasets, including Food-101~\cite{bossard2014food}, ImageNet-Sketch~\cite{wang2019learning}, ImageNet-R~\cite{hendrycks2021many}, and ImageNet-V2~\cite{recht2019imagenet}. Following InterNeg~\cite{xu2026mind}, both InterNeg and our method randomly sample four images per class from each ID dataset to construct ID image prototypes, while using the remaining images for evaluation. As shown in Table~\ref{tab:id} (more detailed results can be seen in Appendix \ref{sp:ID}), our method consistently achieves best performance across all ID datasets, demonstrating its robustness to various ID datasets.

\begin{table}[h]
	\small
	\centering
    \vspace{-0.5cm}
	\caption{OOD detection performance comparison on various ID datasets (average over Four-OOD).}
	\label{tab:id}
	\scalebox{0.85}{
		\begin{tabular}{@{}c|cc|cc|cc|cc@{}}
			\toprule
			\multirow{2}{*}{Method} & \multicolumn{2}{c|}{Food-101} & \multicolumn{2}{c|}{ImageNet-Sketch} & \multicolumn{2}{c|}{ImageNet-R} & \multicolumn{2}{c}{ImageNetV2} \\
			& AUROC$\uparrow$ & FPR95$\downarrow$ & AUROC$\uparrow$ & FPR95$\downarrow$ & AUROC$\uparrow$ & FPR95$\downarrow$ & AUROC$\uparrow$ & FPR95$\downarrow$ \\ \midrule
			NegLabel  & 99.89 & 0.41  & 93.59 & 27.42 & 94.55 & 20.63 & 93.08 & 29.77 \\
			CSP       & 99.91 & 0.35  & 95.34 & 19.54 & 97.53 & 8.92  & 95.02 & 20.67 \\
			InterNeg  & 99.98 & 0.13  & 96.59 & 15.66 & 98.20 & 7.38  & 96.71 & 13.55 \\
			TINS     & \textbf{99.99} & \textbf{0.06} & \textbf{99.56} & \textbf{2.03} & \textbf{99.62} & \textbf{1.82} & \textbf{98.20} & \textbf{8.05} \\ \bottomrule
		\end{tabular}}
\end{table}

\subsection{Analyses and Discussions}

\noindent \textbf{Ablation study on key components.}
Table \ref{tab:ablation2} reports the ablation results of group-wise aggregation, random permutation, ID-prototype-separated regularization, and the buffer update strategy on the Four-OOD benchmark with ImageNet-1K as ID.
Removing all components leads to the weakest performance, indicating the limitation of naive dynamic negative-label expansion. Group-wise aggregation significantly improves performance, likely by stabilizing the contribution of negative features. Random permutation further improves performance by reducing order bias in negatives grouping. Regularization also improves performance, suggesting that keeping learned negative embeddings away from ID prototypes helps alleviate hard ID contamination.
The buffer is specifically designed for OOD semantic shift and thus brings only negligible average changes under the default Four-OOD setting, where Case 7 performs almost the same as Case 5. Overall, Case 5 and Case 7 are the two strongest variants, demonstrating that group-wise aggregation, random permutation, and ID-prototype-separated regularization are the main contributors.

    \begin{table*}[!h]
        \centering
        \caption{\textbf{Ablation studies on the key components of our proposed method.} Results are averaged over the Four-OOD benchmark with ImageNet-1K as ID.}
        \label{tab:ablation2}
        \resizebox{0.8\textwidth}{!}{%
        \begin{tabular}{lccccccc}
        \toprule
         & Case 1 & Case 2 & Case 3 & Case 4 & Case 5 & Case 6 & Case 7 \\
        \midrule
        Group-wise Aggregation  & $\times$ & $\checkmark$ & $\checkmark$ & $\checkmark$  & $\checkmark$ & $\checkmark$ & $\checkmark$ \\
        Random Permutation & $\times$ & $\times$ & $\checkmark$ & $\times$ & $\checkmark$ & $\checkmark$ & $\checkmark$ \\
        Regularization & $\times$ & $\times$ & $\times$ & $\checkmark$ & $\checkmark$ & $\times$ & $\checkmark$ \\
        Buffer   & $\times$ & $\times$ & $\times$ & $\times$ & $\times$ & $\checkmark$ & $\checkmark$ \\
        \midrule
        AUROC & 93.67 & 95.46 & 97.81 & 96.41 & 98.50 & 97.77 & 98.51 \\
        FPR95 & 44.32 & 24.00 & 9.87& 18.30 & 6.68 & 9.96 & 6.72 \\
        \bottomrule
        \end{tabular}%
        }
        \end{table*}

\begin{table}[h]
    \centering
    \vspace{-0.3cm}
    \caption{OOD detection results under Temporal-shift, where ImageNet-1K ID dataset and a ViTB/16 CLIP encoder are adopted. Full results are available in Table \ref{tab:temporal_shifts_vitb16_dup}.}
    \label{tab:temporal_shifts_ov}
    \resizebox{\textwidth}{!}{
    \begin{tabular}{lcccccccccc}
        \toprule
        \multirow{2}{*}{Methods}
        & \multicolumn{2}{c}{\makebox[0.3em][c]{\textit{i}}~$\rightarrow$~\makebox[0.3em][c]{\textit{S}}~$\rightarrow$~\makebox[0.3em][c]{\textit{P}}~$\rightarrow$~\makebox[0.3em][c]{\textit{T}}}
        & \multicolumn{2}{c}{\makebox[0.3em][c]{\textit{S}}~$\rightarrow$~\makebox[0.3em][c]{\textit{P}}~$\rightarrow$~\makebox[0.3em][c]{\textit{T}}~$\rightarrow$~\makebox[0.3em][c]{\textit{i}}}
        & \multicolumn{2}{c}{\makebox[0.3em][c]{\textit{P}}~$\rightarrow$~\makebox[0.3em][c]{\textit{T}}~$\rightarrow$~\makebox[0.3em][c]{\textit{i}}~$\rightarrow$~\makebox[0.3em][c]{\textit{S}}}
        & \multicolumn{2}{c}{\makebox[0.3em][c]{\textit{T}}~$\rightarrow$~\makebox[0.3em][c]{\textit{i}}~$\rightarrow$~\makebox[0.3em][c]{\textit{S}}~$\rightarrow$~\makebox[0.3em][c]{\textit{P}}}
        & \multicolumn{2}{c}{Overall Average} \\
        \cmidrule(lr){2-3} \cmidrule(lr){4-5} \cmidrule(lr){6-7} \cmidrule(lr){8-9} \cmidrule(lr){10-11}
        & AUROC$\uparrow$ & FPR95$\downarrow$
        & AUROC$\uparrow$ & FPR95$\downarrow$
        & AUROC$\uparrow$ & FPR95$\downarrow$
        & AUROC$\uparrow$ & FPR95$\downarrow$
        & AUROC$\uparrow$ & FPR95$\downarrow$ \\
        \midrule
        w/o buffer & 96.97 & 14.35 & 96.74 & 14.92 & 96.90 & 13.95 & 97.18 & 13.23 & 96.95 & 14.11 \\
       w/ buffer & \textbf{97.83} & \textbf{10.17} & \textbf{97.72} & \textbf{10.28} & \textbf{97.72} & \textbf{10.40} & \textbf{98.28} & \textbf{7.54}  & \textbf{97.89} & \textbf{9.60}  \\
   
        \bottomrule
    \end{tabular}
    }
    \vspace{-0.3cm}
\end{table}

\noindent \textbf{Robustness under Temporal-shift.}
To evaluate robustness under sudden semantic shift in the evolving distribution of OOD data, 
we construct the Temporal-shift setting where Four-OOD datasets (iNaturalist, SUN, Places, Textures, denoted as \textit{i}, \textit{S}, \textit{P}, and \textit{T},  respectively) arrive sequentially in different orderings. 
Crucially, the learned negative embeddings accumulated during inference are not reset as initial state. 
% \begin{wrapfigure}{r}{0.4\textwidth}
%     \vspace{-0.4cm}
%     \centering
%     \includegraphics[width=0.36\textwidth]{fig/interneg_iter_fpr95.pdf}
%     \vspace{-0.2cm}
%     \caption{Effect of two initialization strategies with different iteration steps.}
%     \label{fig:interneg_iter_fpr95}
%     \vspace{-0.8cm}
% \end{wrapfigure}
As shown in Table~\ref{tab:temporal_shifts_ov}, using the buffer consistently improves all temporal orders. This demonstrates that the buffer improves the stability of dynamic learning of negative-label adaptation under OOD semantic shift. In addition, we analyze the impact of ID/OOD data ordering in Appendix \ref{sec:data_order}.

\noindent \textbf{Effect of Vocabulary-prior Initialization.}
Figure \ref{fig:interneg_iter_fpr95} compares random initialization with the proposed vocabulary-prior initialization under different optimization steps. The vocabulary-prior strategy consistently achieves lower FPR95 across all iteration steps, demonstrating that it provides a more favorable starting point for modality inversion. Compared with random initialization, it reaches strong performance with fewer iteration steps, indicating faster convergence and better optimization efficiency.
\begin{figure*}[h]
    \centering
    \begin{minipage}[t]{0.33\textwidth}
        \centering
        \includegraphics[width=\linewidth]{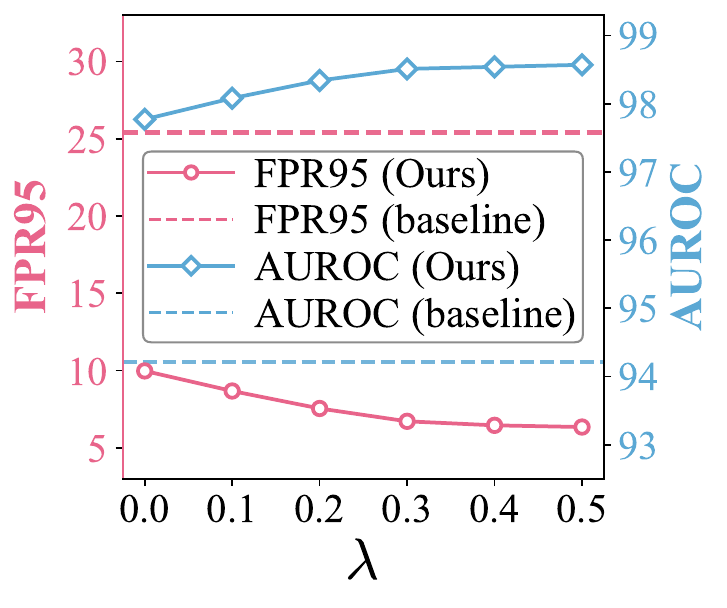}
        \vspace{-6mm}
        \caption*{(a) Effect of the coefficient $\lambda$}
    \end{minipage}\hspace{-0.4mm}
    \begin{minipage}[t]{0.33\textwidth}
        \centering
        \includegraphics[width=\linewidth]{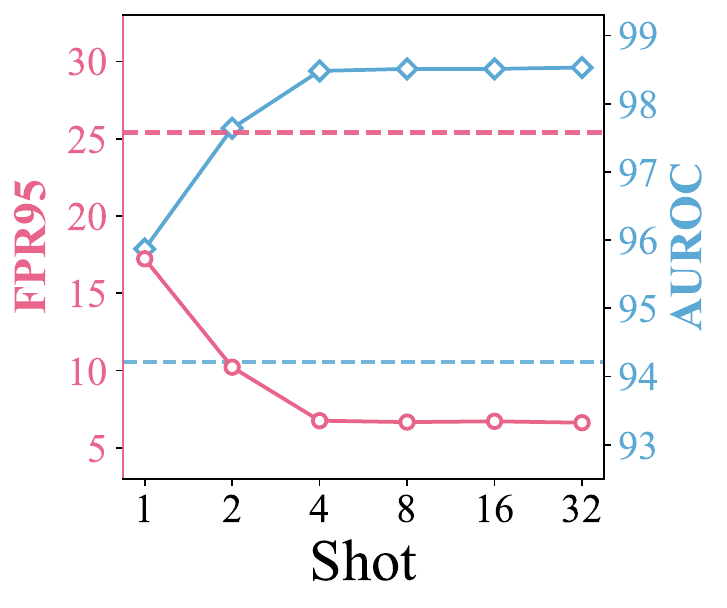}
        \vspace{-6mm}
        \caption*{(b) Effect of ID-shot number}
    \end{minipage}\hspace{-0.4mm}
    \begin{minipage}[t]{0.33\textwidth}
        \centering
        \includegraphics[width=\linewidth]{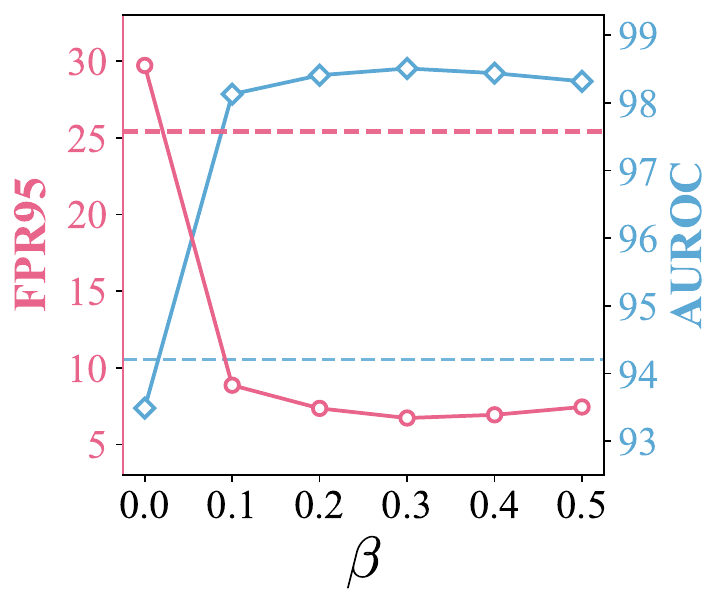}
        \vspace{-6mm}
        \caption*{(c) Effect of threshold $\beta$}
    \end{minipage}
    \vspace{-3mm}
    \caption{Hyperparameter analyses: (a) Regularization coefficient $\lambda$; (b) Number of ID shots for prototype estimation; (c) Threshold $\beta$ on detection performance. Results are averaged over the Four-OOD benchmark with ImageNet-1K as ID.}
    \label{fig:ablation}
\end{figure*}

\noindent \textbf{Hyperparameter Analyses.}
As shown in Figure \ref{fig:ablation}, we observe that our method is insensitive to key hyperparameters. The performance improves with a moderate regularization coefficient $\lambda$ and becomes relatively stable around $\lambda=0.3$, confirming the effect of ID-prototype separation. Increasing the number of ID shots strengthens prototype estimation, while the gain becomes marginal after 4 shots. For $\beta$, dynamic expansion is ineffective when $\beta=0$, but remains stable across a wide range of positive thresholds. We therefore use $\lambda=0.3$, 16 shots, and $\beta=0.3$ as default settings. 

Additional analyses of other hyperparameters are provided in Appendix~\ref{sup:hyper}, where our method also shows stable performance. We further evaluate our method under different ID:OOD ratios in Appendix~\ref{sup:ratios} and study its sensitivity to corpus choice in Appendix~\ref{sup:corpus}.

\section{Conclusion}

We propose a \textbf{T}est-time \textbf{I}D-prototype-separated \textbf{N}egative \textbf{S}emantics learning method, termed \textbf{TINS}, for OOD detection. 
TINS dynamically learns adaptive negative embeddings from online test samples, enabling the OOD detector to better capture evolving OOD semantics. 
To stabilize this process, we introduce ID-prototype-separated regularization to suppress hard-ID contamination, group-wise aggregation scoring to alleviate denominator inflation, and a buffer update strategy to improve robustness under OOD semantic shift, achieving state-of-the-art results on various benchmarks.

\bibliographystyle{plainnat}
\bibliography{main}

@String(CVPR= {IEEE Conf. Comput. Vis. Pattern Recog.})

@String(ICLR = {Int. Conf. Learn. Represent.})

@String(CVPR  = {CVPR})

@String(ICLR  = {ICLR})

@article{wang2025application,
  title={Application of Uncertainty to Out-of-Distribution Detection for Autonomous Driving Perception Safety},
  author={Wang, Ke and Ma, Qi and Shen, Chongqiang and Lu, Jianbo},
  journal={IEEE Transactions on Intelligent Transportation Systems},
  year={2025},
  publisher={IEEE}
}

@article{asad2025towards,
  title={Towards Robust Autonomous Driving: Out-of-Distribution Object Detection in Bird's Eye View Space},
  author={Asad, Muhammad and Ullah, Ihsan and Sistu, Ganesh and Madden, Michael G},
  journal={IEEE Open Journal of Vehicular Technology},
  year={2025},
  publisher={IEEE}
}

@article{vaswani2017attention,
  title={Attention is all you need},
  author={Vaswani, Ashish and Shazeer, Noam and Parmar, Niki and Uszkoreit, Jakob and Jones, Llion and Gomez, Aidan N and Kaiser, {\L}ukasz and Polosukhin, Illia},
  journal={Advances in neural information processing systems},
  volume={30},
  year={2017}
}

@article{hendrycks17baseline,
  author    = {Dan Hendrycks and Kevin Gimpel},
  title     = {A Baseline for Detecting Misclassified and Out-of-Distribution Examples in Neural Networks},
  journal = {Proceedings of International Conference on Learning Representations},
  year = {2017},
}

@article{huang2021importance,
  title={On the importance of gradients for detecting distributional shifts in the wild},
  author={Huang, Rui and Geng, Andrew and Li, Yixuan},
  journal={Advances in Neural Information Processing Systems},
  volume={34},
  pages={677--689},
  year={2021}
}

@article{sun2021react,
  title={React: Out-of-distribution detection with rectified activations},
  author={Sun, Yiyou and Guo, Chuan and Li, Yixuan},
  journal={Advances in Neural Information Processing Systems},
  volume={34},
  pages={144--157},
  year={2021}
}

@inproceedings{xu2023scaling,
  title={Scaling for Training Time and Post-hoc Out-of-distribution Detection Enhancement},
  author={Xu, Kai and Chen, Rongyu and Franchi, Gianni and Yao, Angela},
  booktitle={The Twelfth International Conference on Learning Representations}
}

@article{du2022vos,
      title={VOS: Learning What You Don’t Know by Virtual Outlier Synthesis}, 
      author={Du, Xuefeng and Wang, Zhaoning and Cai, Mu and Li, Yixuan},
      journal={Proceedings of the International Conference on Learning Representations},
      year={2022}
}

@inproceedings{kong2021opengan,
  title={Opengan: Open-set recognition via open data generation},
  author={Kong, Shu and Ramanan, Deva},
  booktitle={Proceedings of the IEEE/CVF International Conference on Computer Vision},
  pages={813--822},
  year={2021}
}

@article{hendrycks2019augmix,
  title={Augmix: A simple data processing method to improve robustness and uncertainty},
  author={Hendrycks, Dan and Mu, Norman and Cubuk, Ekin D and Zoph, Barret and Gilmer, Justin and Lakshminarayanan, Balaji},
  journal={arXiv preprint arXiv:1912.02781},
  year={2019}
}

@InProceedings{Hendrycks_2022_CVPR,
    author    = {Hendrycks, Dan and Zou, Andy and Mazeika, Mantas and Tang, Leonard and Li, Bo and Song, Dawn and Steinhardt, Jacob},
    title     = {PixMix: Dreamlike Pictures Comprehensively Improve Safety Measures},
    booktitle = {Proceedings of the IEEE/CVF Conference on Computer Vision and Pattern Recognition (CVPR)},
    month     = {June},
    year      = {2022},
    pages     = {16783-16792}
}

@article{hendrycks2019using,
  title={Using self-supervised learning can improve model robustness and uncertainty},
  author={Hendrycks, Dan and Mazeika, Mantas and Kadavath, Saurav and Song, Dawn},
  journal={Advances in neural information processing systems},
  volume={32},
  year={2019}
}

@article{hendrycks2018deep,
  title={Deep Anomaly Detection with Outlier Exposure},
  author={Hendrycks, Dan and Mazeika, Mantas and Dietterich, Thomas},
  journal={Proceedings of the International Conference on Learning Representations},
  year={2019}
}

@article{liu2020energy,
  title={Energy-based out-of-distribution detection},
  author={Liu, Weitang and Wang, Xiaoyun and Owens, John and Li, Yixuan},
  journal={Advances in neural information processing systems},
  volume={33},
  pages={21464--21475},
  year={2020}
}

@article{krizhevsky2009learning,
  title={Learning multiple layers of features from tiny images},
  author={Krizhevsky, Alex and Hinton, Geoffrey and others},
  year={2009},
  publisher={Toronto, ON, Canada}
}

@article{deng2012mnist,
  title={The mnist database of handwritten digit images for machine learning research [best of the web]},
  author={Deng, Li},
  journal={IEEE signal processing magazine},
  volume={29},
  number={6},
  pages={141--142},
  year={2012},
  publisher={IEEE}
}

@article{netzer2011reading,
  title={Reading digits in natural images with unsupervised feature learning},
  author={Netzer, Yuval and Wang, Tao and Coates, Adam and Bissacco, Alessandro and Wu, Bo and Ng, Andrew Y},
  year={2011}
}

@inproceedings{cimpoi2014describing,
  title={Describing textures in the wild},
  author={Cimpoi, Mircea and Maji, Subhransu and Kokkinos, Iasonas and Mohamed, Sammy and Vedaldi, Andrea},
  booktitle={Proceedings of the IEEE conference on computer vision and pattern recognition},
  pages={3606--3613},
  year={2014}
}

@inproceedings{xiao2010sun,
  title={Sun database: Large-scale scene recognition from abbey to zoo},
  author={Xiao, Jianxiong and Hays, James and Ehinger, Krista A and Oliva, Aude and Torralba, Antonio},
  booktitle={2010 IEEE computer society conference on computer vision and pattern recognition},
  pages={3485--3492},
  year={2010},
  organization={IEEE}
}

@article{zhou2017places,
  title={Places: A 10 million image database for scene recognition},
  author={Zhou, Bolei and Lapedriza, Agata and Khosla, Aditya and Oliva, Aude and Torralba, Antonio},
  journal={IEEE transactions on pattern analysis and machine intelligence},
  volume={40},
  number={6},
  pages={1452--1464},
  year={2017},
  publisher={IEEE}
}

@article{le2015tiny,
  title={Tiny imagenet visual recognition challenge},
  author={Le, Ya and Yang, Xuan},
  journal={CS 231N},
  volume={7},
  number={7},
  pages={3},
  year={2015}
}

@inproceedings{deng2009imagenet,
  title={Imagenet: A large-scale hierarchical image database},
  author={Deng, Jia and Dong, Wei and Socher, Richard and Li, Li-Jia and Li, Kai and Fei-Fei, Li},
  booktitle={2009 IEEE conference on computer vision and pattern recognition},
  pages={248--255},
  year={2009},
  organization={Ieee}
}

@inproceedings{bossard2014food,
  title={Food-101--mining discriminative components with random forests},
  author={Bossard, Lukas and Guillaumin, Matthieu and Van Gool, Luc},
  booktitle={European conference on computer vision},
  pages={446--461},
  year={2014},
  organization={Springer}
}

@article{wang2019learning,
  title={Learning robust global representations by penalizing local predictive power},
  author={Wang, Haohan and Ge, Songwei and Lipton, Zachary and Xing, Eric P},
  journal={Advances in neural information processing systems},
  volume={32},
  year={2019}
}

@inproceedings{recht2019imagenet,
  title={Do imagenet classifiers generalize to imagenet?},
  author={Recht, Benjamin and Roelofs, Rebecca and Schmidt, Ludwig and Shankar, Vaishaal},
  booktitle={International conference on machine learning},
  pages={5389--5400},
  year={2019},
  organization={PMLR}
}

@inproceedings{hendrycks2021many,
  title={The many faces of robustness: A critical analysis of out-of-distribution generalization},
  author={Hendrycks, Dan and Basart, Steven and Mu, Norman and Kadavath, Saurav and Wang, Frank and Dorundo, Evan and Desai, Rahul and Zhu, Tyler and Parajuli, Samyak and Guo, Mike and others},
  booktitle={Proceedings of the IEEE/CVF international conference on computer vision},
  pages={8340--8349},
  year={2021}
}

@article{lee2018simple,
  title={A simple unified framework for detecting out-of-distribution samples and adversarial attacks},
  author={Lee, Kimin and Lee, Kibok and Lee, Honglak and Shin, Jinwoo},
  journal={Advances in neural information processing systems},
  volume={31},
  year={2018}
}

@inproceedings{sun2022out,
  title={Out-of-distribution detection with deep nearest neighbors},
  author={Sun, Yiyou and Ming, Yifei and Zhu, Xiaojin and Li, Yixuan},
  booktitle={International Conference on Machine Learning},
  pages={20827--20840},
  year={2022},
  organization={PMLR}
}

@inproceedings{djurisic2022extremely,
  title={Extremely Simple Activation Shaping for Out-of-Distribution Detection},
  author={Djurisic, Andrija and Bozanic, Nebojsa and Ashok, Arjun and Liu, Rosanne},
  booktitle={The Eleventh International Conference on Learning Representations}
}

@inproceedings{zhang2024lapt,
  title={Lapt: Label-driven automated prompt tuning for ood detection with vision-language models},
  author={Zhang, Yabin and Zhu, Wenjie and He, Chenhang and Zhang, Lei},
  booktitle={European conference on computer vision},
  pages={271--288},
  year={2024},
  organization={Springer}
}

@inproceedings{liang2018enhancing,
  title={Enhancing The Reliability of Out-of-distribution Image Detection in Neural Networks},
  author={Liang, Shiyu and Li, Yixuan and Srikant, R},
  booktitle={International Conference on Learning Representations},
  year={2018}
}

@inproceedings{zhang2023openood,
  title={OpenOOD v1. 5: Enhanced Benchmark for Out-of-Distribution Detection},
  author={Zhang, Jingyang and Yang, Jingkang and Wang, Pengyun and Wang, Haoqi and Lin, Yueqian and Zhang, Haoran and Sun, Yiyou and Du, Xuefeng and Li, Yixuan and Liu, Ziwei and others},
  booktitle={NeurIPS 2023 Workshop on Distribution Shifts: New Frontiers with Foundation Models}
}

@inproceedings{huang2021mos,
  title={Mos: Towards scaling out-of-distribution detection for large semantic space},
  author={Huang, Rui and Li, Yixuan},
  booktitle={Proceedings of the IEEE/CVF Conference on Computer Vision and Pattern Recognition},
  pages={8710--8719},
  year={2021}
}

@inproceedings{vaze2021open,
  title={Open-set recognition: A good closed-set classifier is all you need},
  author={Vaze, Sagar and Han, Kai and Vedaldi, Andrea and Zisserman, Andrew},
  booktitle={International conference on learning representations},
  year={2021}
}

@inproceedings{bitterwolf2023or,
  title={In or out? fixing imagenet out-of-distribution detection evaluation},
  author={Bitterwolf, Julian and M{\"u}ller, Maximilian and Hein, Matthias},
  booktitle={Proceedings of the 40th International Conference on Machine Learning},
  pages={2471--2506},
  year={2023}
}

@inproceedings{yang2025oodd,
  title={OODD: Test-time Out-of-Distribution Detection with Dynamic Dictionary},
  author={Yang, Yifeng and Zhu, Lin and Sun, Zewen and Liu, Hengyu and Gu, Qinying and Ye, Nanyang},
  booktitle={Proceedings of the Computer Vision and Pattern Recognition Conference},
  pages={30630--30639},
  year={2025}
}

@inproceedings{wang2022vim,
  title={Vim: Out-of-distribution with virtual-logit matching},
  author={Wang, Haoqi and Li, Zhizhong and Feng, Litong and Zhang, Wayne},
  booktitle={Proceedings of the IEEE/CVF conference on computer vision and pattern recognition},
  pages={4921--4930},
  year={2022}
}

@inproceedings{van2018inaturalist,
  title={The inaturalist species classification and detection dataset},
  author={Van Horn, Grant and Mac Aodha, Oisin and Song, Yang and Cui, Yin and Sun, Chen and Shepard, Alex and Adam, Hartwig and Perona, Pietro and Belongie, Serge},
  booktitle={Proceedings of the IEEE conference on computer vision and pattern recognition},
  pages={8769--8778},
  year={2018}
}

@inproceedings{he2016deep,
  title={Deep residual learning for image recognition},
  author={He, Kaiming and Zhang, Xiangyu and Ren, Shaoqing and Sun, Jian},
  booktitle={Proceedings of the IEEE conference on computer vision and pattern recognition},
  pages={770--778},
  year={2016}
}

@article{ming2022delving,
  title={Delving into out-of-distribution detection with vision-language representations},
  author={Ming, Yifei and Cai, Ziyang and Gu, Jiuxiang and Sun, Yiyou and Li, Wei and Li, Yixuan},
  journal={Advances in neural information processing systems},
  volume={35},
  pages={35087--35102},
  year={2022}
}

@inproceedings{jiang2024negative,
  title={Negative Label Guided OOD Detection with Pretrained Vision-Language Models},
  author={Jiang, Xue and Liu, Feng and Fang, Zhen and Chen, Hong and Liu, Tongliang and Zheng, Feng and Han, Bo},
  booktitle={The Twelfth International Conference on Learning Representations},
  year={2024}
}

@inproceedings{radford2021learning,
  title={Learning transferable visual models from natural language supervision},
  author={Radford, Alec and Kim, Jong Wook and Hallacy, Chris and Ramesh, Aditya and Goh, Gabriel and Agarwal, Sandhini and Sastry, Girish and Askell, Amanda and Mishkin, Pamela and Clark, Jack and others},
  booktitle={International conference on machine learning},
  pages={8748--8763},
  year={2021},
  organization={PmLR}
}

@article{zhang2024adaneg,
  title={Adaneg: Adaptive negative proxy guided ood detection with vision-language models},
  author={Zhang, Yabin and Zhang, Lei},
  journal={Advances in Neural Information Processing Systems},
  volume={37},
  pages={38744--38768},
  year={2024}
}

@inproceedings{fu2025clipscope,
  title={Clipscope: Enhancing zero-shot ood detection with bayesian scoring},
  author={Fu, Hao and Patel, Naman and Krishnamurthy, Prashanth and others},
  booktitle={Proceedings of the Winter Conference on Applications of Computer Vision},
  pages={5346--5355},
  year={2025}
}

@inproceedings{dosovitskiyimage,
  title={An Image is Worth 16x16 Words: Transformers for Image Recognition at Scale},
  author={Dosovitskiy, Alexey and Beyer, Lucas and Kolesnikov, Alexander and Weissenborn, Dirk and Zhai, Xiaohua and Unterthiner, Thomas and Dehghani, Mostafa and Minderer, Matthias and Heigold, Georg and Gelly, Sylvain and others},
  booktitle={International Conference on Learning Representations}
}

@inproceedings{locoop2023,
  author       = {Atsuyuki Miyai and
                  Qing Yu and
                  Go Irie and
                  Kiyoharu Aizawa},
  title        = {LoCoOp: Few-Shot Out-of-Distribution Detection via Prompt Learning},
  booktitle    = {Annual Conference
                  on Neural Information Processing Systems},
  pages = {76298--76310},
  year         = {2023},
}

@inproceedings{bai2024idlike,
  author       = {Yichen Bai and
                  Zongbo Han and
                  Bing Cao and
                  Xiaoheng Jiang and
                  Qinghua Hu and
                  Changqing Zhang},
  title        = {ID-like Prompt Learning for Few-Shot Out-of-Distribution Detection},
  booktitle    = {Conference on Computer Vision and Pattern Recognition},
  pages        = {17480--17489},
  year         = {2024},
}

@inproceedings{nie2024lsn,
  author       = {Jun Nie and
                  Yonggang Zhang and
                  Zhen Fang and
                  Tongliang Liu and
                  Bo Han and
                  Xinmei Tian},
  title        = {Out-of-Distribution Detection with Negative Prompts},
  booktitle    = {International Conference on Learning Representations},
  pages        = {1--20},
  year         = {2024},
}

@inproceedings{cao2024eoe,
  author       = {Chentao Cao and
                  Zhun Zhong and
                  Zhanke Zhou and
                  Yang Liu and
                  Tongliang Liu and
                  Bo Han},
  title        = {Envisioning Outlier Exposure by Large Language Models for Out-of-Distribution
                  Detection},
  booktitle    = {International Conference on Machine Learning},
  year         = {2024},
}

@inproceedings{chen2024conjugated,
  author       = {Mengyuan Chen and
                  Junyu Gao and
                  Changsheng Xu},
  title        = {Conjugated Semantic Pool Improves {OOD} Detection with Pre-trained Vision-Language Models},
  booktitle    = {Annual Conference on Neural Information Processing Systems},
  pages = {82560--82593},
  year         = {2024},
}

@article{xu2026mind,
  title={Mind the Way You Select Negative Texts: Pursuing the Distance Consistency in OOD Detection with VLMs},
  author={Xu, Zhikang and Xu, Qianqian and Wang, Zitai and Hua, Cong and Li, Sicong and Yang, Zhiyong and Huang, Qingming},
  journal={arXiv preprint arXiv:2603.02618},
  year={2026}
}

@article{han2024aucseg,
  title={Aucseg: Auc-oriented pixel-level long-tail semantic segmentation},
  author={Han, Boyu and Xu, Qianqian and Yang, Zhiyong and Bao, Shilong and Wen, Peisong and Jiang, Yangbangyan and Huang, Qingming},
  journal={Advances in Neural Information Processing Systems},
  volume={37},
  pages={126863--126907},
  year={2024}
}

@article{ming2022hyperspherical,
  title={How to Exploit Hyperspherical Embeddings for Out-of-Distribution Detection?},
  author={Ming, Yifei and Sun, Yiyou and Dia, Ousmane and Li, Yixuan},
  journal={arXiv preprint arXiv:2203.04450},
  year={2022}
}

@inproceedings{zaeemzadeh2021union,
  title={Out-of-Distribution Detection Using Union of 1-Dimensional Subspaces},
  author={Zaeemzadeh, Alireza and Bisagno, Niccolo and Sambugaro, Zeno and Conci, Nicola and Rahnavard, Nazanin and Shah, Mubarak},
  booktitle={Proceedings of the IEEE/CVF Conference on Computer Vision and Pattern Recognition},
  pages={9452--9461},
  year={2021}
}

@inproceedings{ryu2018out,
  title={Out-of-Domain Detection Based on Generative Adversarial Network},
  author={Ryu, Seonghan and Koo, Sangjun and Yu, Hwanjo and Lee, Gary Geunbae},
  booktitle={Proceedings of the 2018 Conference on Empirical Methods in Natural Language Processing},
  pages={714--718},
  year={2018}
}

@article{sehwag2021ssd,
  title={SSD: A Unified Framework for Self-Supervised Outlier Detection},
  author={Sehwag, Vikash and Chiang, Mung and Mittal, Prateek},
  journal={arXiv preprint arXiv:2103.12051},
  year={2021}
}

@inproceedings{du2022siren,
  author       = {Xuefeng Du and
                  Gabriel Gozum and
                  Yifei Ming and
                  Yixuan Li},
  title        = {{SIREN:} Shaping Representations for Detecting Out-of-Distribution
                  Objects},
  booktitle    = {Annual Conference
                  on Neural Information Processing Systems},
  pages = {20434--20449},
  year         = {2022},
}

@inproceedings{mistretta2025cross,
  title={CROSS THE GAP: EXPOSING THE INTRA-MODAL MISALIGNMENT IN CLIP VIA MODALITY INVERSION},
  author={Mistretta, M and Baldrati, A and Agnolucci, L and Bertini, M and Bagdanov, AD and others},
  booktitle={13th International Conference on Learning Representations, ICLR 2025},
  pages={90437--90458},
  year={2025},
  organization={International Conference on Learning Representations, ICLR}
}

@inproceedings{liu2023gen,
  title={Gen: Pushing the limits of softmax-based out-of-distribution detection},
  author={Liu, Xixi and Lochman, Yaroslava and Zach, Christopher},
  booktitle={Proceedings of the IEEE/CVF conference on computer vision and pattern recognition},
  pages={23946--23955},
  year={2023}
}

@article{zhu2025ants,
  title={ANTS: Adaptive Negative Textual Space Shaping for OOD Detection via Test-Time MLLM Understanding and Reasoning},
  author={Zhu, Wenjie and Zhang, Yabin and Jin, Xin and Zeng, Wenjun and Zhang, Lei},
  journal={arXiv preprint arXiv:2509.03951},
  year={2025}
}

@article{vaya2020bimcv,
    title={BIMCV COVID-19+: a large annotated dataset of RX and CT images from COVID-19 patients},
    author={Vay{\'a}, Maria De La Iglesia and Saborit, Jose Manuel and Montell, Joaquim Angel and Pertusa, Antonio and Bustos, Aurelia and Cazorla, Miguel and Galant, Joaquin and Barber, Xavier and Orozco-Beltr{\'a}n, Domingo and Garc{\'\i}a-Garc{\'\i}a, Francisco and others},
    journal={arXiv preprint arXiv:2006.01174},
    year={2020}
  }

@article{zhang2026activation,
  title={Activation Matters: Test-time Activated Negative Labels for OOD Detection with Vision-Language Models},
  author={Zhang, Yabin and Varma, Maya and Gao, Yunhe and Delbrouck, Jean-Benoit and Liu, Jiaming and Wang, Chong and Langlotz, Curtis},
  journal={arXiv preprint arXiv:2603.25250},
  year={2026}
}

@article{miller1995wordnet,
  title={WordNet: a lexical database for English},
  author={Miller, George A},
  journal={Communications of the ACM},
  volume={38},
  number={11},
  pages={39--41},
  year={1995},
  publisher={ACM New York, NY, USA}
}

%%%%%%%%%%%%%%%%%%%%%%%%%%%%%%%%%%%%%%%%%%%%%%%%%%%%%%%%%%%%
\setcounter{theorem}{0}  % 重置定理计数器，使附录中的定理从1开始编号

\appendix
\newpage
\section*{Appendices}
\addcontentsline{toc}{section}{Appendix}
\startcontents[appendices]
\printcontents[appendices]{}{1}{\setcounter{tocdepth}{2}}
\newpage
\section{Limitations}
\begin{wrapfigure}[15]{r}{0.42\textwidth}
    \vspace{-0.6cm}
    \centering
    \includegraphics[width=0.95\linewidth]{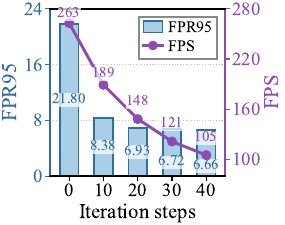}
    \vspace{-0.25cm}
    \caption{Trade-off between OOD detection performance and inference speed under different optimization iteration steps.}
    \label{fig:fps}
    \vspace{-0.4cm}
\end{wrapfigure}
One minor limitation of our method is the additional cost introduced by test-time optimization.
As shown in Figure~\ref{fig:fps}, on a single NVIDIA RTX 4090 GPU, increasing the number of optimization iteration steps from 0 to 40 reduces FPR95 from 21.80\% to 6.66\%, while decreasing the inference speed from 263 FPS to 105 FPS.
In addition, optimizing prompt embeddings incurs extra memory overhead, with a peak additional memory usage of 2076 MB in our experiments.
This performance--efficiency trade-off may limit the applicability of our method in latency- or memory-sensitive deployment scenarios.
This overhead may be alleviated by hardware-aware acceleration, e.g., mixed-precision execution on Tensor Cores for the repeated text-encoder forward/backward passes during modality inversion.

\section{Broader Impacts}
This work aims to improve the reliability of vision-language models in open-world deployment by detecting inputs outside the in-distribution label space. Improved OOD detection can benefit safety-critical applications such as autonomous driving, medical image analysis, and content moderation, where overconfident predictions on unknown inputs may cause harmful decisions. By adapting negative semantics at test time, our method may help deployed systems handle evolving data distributions during test time. However, OOD detection is not a complete safeguard. False negatives may accept OOD samples as ID, while false positives may reject valid ID inputs. In high-stakes domains, our method should therefore be used as part of a broader safety pipeline with human oversight, calibration, monitoring, and domain-specific validation. Since it relies on pretrained VLMs and corpus-derived negative semantics, it may also inherit biases from the underlying model and vocabulary source.

\section{Pseudo Code} \label{sec:pseudo_code}
\IncMargin{1em}
\begin{algorithm}[h]
\caption{The proposed TINS method for OOD detection}
\label{alg:tins}
\LinesNumbered
\KwIn{ID label space $\mathcal{Y}^{+}$ and test set $\mathcal{X}$;}
$\mathcal{Y}^{-} \leftarrow$ static negative labels mined from corpus\;
Construct ID prototypes $\{\boldsymbol{\mu}_c\}_{c=1}^{C}$\;
Initialize an empty dynamic bank $\mathcal{B}$ and buffer $\mathcal{Q}$\;
\For{$\boldsymbol{x} \in \mathcal{X}$}{
    Compute $S_{\mathrm{group}}(\boldsymbol{x})$ with static negatives and the current dynamic bank\;
    Determine whether $\boldsymbol{x}$ is potential OOD by thresholding current $S_{\mathrm{group}}(\boldsymbol{x})$\;
    \If{$\boldsymbol{x}$ is potential OOD}{
        Initialize the pseudo-token embedding $\boldsymbol{z}$ with random or vocabulary-prior initialization\;
        Learn negative text feature $\boldsymbol{t}^{-}$ via image-to-text modality inversion with $\mathcal{L}_{\mathrm{ours}}$\; 
        Filter $\boldsymbol{t}^{-}$ with ID-prototype-separated criterion\;
        Update dynamic bank $\mathcal{B}$ and buffer $\mathcal{Q}$ using Eq.~\ref{Eq:buffer_update}\;
    }
    Recompute $S_{\mathrm{group}}(\boldsymbol{x})$ with static negatives and the updated dynamic bank\;
}
\Return{Collected recomputed scores $\{S_{\mathrm{group}}\}$}\;
\end{algorithm}
\DecMargin{1em}
\section{Proof of Theorem~\ref{thm:balanced_grouping}} \label{sec:proof}

\begin{theorem}
    Consider two grouping strategies indexed by $k\in\{1,2\}$, each with the same number of groups $G$.
    Let $\mathbf{A}^{(k)}=(A_1^{(k)},\ldots,A_G^{(k)})$ be the group-level negative activations under the grouping strategy $k$,
    and let $A_{[g]}^{(k)}$ denote the $g$-th largest entry of $\mathbf{A}^{(k)}$.
    Under the fixed-total negative activation, the two grouping strategies satisfy
    $
    \sum_{g=1}^{G}A_g^{(1)}=\sum_{g=1}^{G}A_g^{(2)}.
    $
    If $\mathbf{A}^{(2)}$ is more balanced than $\mathbf{A}^{(1)}$, i.e,
    $
    \sum_{g=1}^{m}A_{[g]}^{(2)}
    \le
    \sum_{g=1}^{m}A_{[g]}^{(1)},
    \forall m=1,2,\ldots,G-1,
    $
    then $
    \frac{1}{G}\sum_{g=1}^{G}\frac{P}{P+A_g^{(2)}}
    \le
    \frac{1}{G}\sum_{g=1}^{G}\frac{P}{P+A_g^{(1)}}$.
    \end{theorem}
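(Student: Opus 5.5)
This is a majorization statement, and the plan is to prove it with Karamata's inequality (equivalently, the Hardy--Littlewood--Pólya characterization of Schur-convexity). Define
\[
\phi(a)=\frac{P}{P+a}, \qquad a\ge 0 .
\]
Since $P>0$, this function satisfies $\phi''(a)=2P/(P+a)^3>0$, so $\phi$ is strictly convex on $[0,\infty)$. The hypotheses say exactly that $\mathbf{A}^{(1)}$ majorizes $\mathbf{A}^{(2)}$: the partial sums of the decreasingly ordered entries of $\mathbf{A}^{(1)}$ dominate those of $\mathbf{A}^{(2)}$ for $m=1,\dots,G-1$, and the full sums agree. Karamata's inequality for convex $\phi$ then gives
\[
\sum_{g}\phi\big(A^{(2)}_g\big)\le\sum_g\phi\big(A^{(1)}_g\big).
\]
The sum is invariant under permuting entries, so the sorting implicit in $A_{[g]}$ costs nothing. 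Dividing by $G$ yields the claim.

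To keep the note self-contained, I would prove Karamata directly via Abel summation rather than only cite it. Write $x_g=A^{(1)}_{[g]}$ and $y_g=A^{(2)}_{[g]}$. Set
\[
c_g=\frac{\phi(x_g)-\phi(y_g)}{x_g-y_g}
\]
when $x_g\neq y_g$, and $c_g=\phi'(x_g)$ otherwise. Then
\[
\sum_g\big(\phi(x_g)-\phi(y_g)\big)=\sum_g c_g(x_g-y_g).
\]
Let $D_m=\sum_{g\le m}(x_g-y_g)$, so $D_m\ge 0$ for $m<G$ and $D_G=0$. Abel summation gives
\[
\sum_g c_g(x_g-y_g)=\sum_{m=1}^{G-1}(c_m-c_{m+1})D_m .
\]

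The key point is that $c_m\ge c_{m+1}$, and this is where convexity enters. Since both $x$ and $y$ are sorted decreasingly, $x_m\ge x_{m+1}$ and $y_m\ge y_{m+1}$. The divided difference of a convex function is nondecreasing in each argument, so $c_m\ge c_{m+1}$. Each term $(c_m-c_{m+1})D_m$ is then nonnegative, and the difference of sums is $\ge 0$.

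The only delicate step is the monotonicity of the divided differences when some $x_g=y_g$ (the tangent-slope convention). This follows from the standard three-chord lemma for convex functions, which I would state briefly.

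Finally, I would note that the entries $A_g\ge 0$ lie in the domain where $\phi$ is convex, since they are positive combinations of exponentials. This is needed to apply the argument.
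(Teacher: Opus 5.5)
Your proof is correct, but it takes a different route from the paper's.

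The paper also uses the convex, decreasing function $\phi(a)=P/(P+a)$, but it argues through pairwise transfers. It first shows that moving $\delta\in[0,(x-y)/2]$ from a larger entry $x$ to a smaller entry $y$ cannot increase $\phi(x)+\phi(y)$, because $h(t)=\phi(x-t)+\phi(y+t)$ has $h'(t)\le 0$. It then asserts that, since $\mathbf{A}^{(1)}$ majorizes $\mathbf{A}^{(2)}$ with equal totals, $\mathbf{A}^{(2)}$ can be reached from $\mathbf{A}^{(1)}$ by finitely many such balancing steps. That reachability claim is the Hardy--Littlewood--P\'olya (Muirhead) lemma on T-transforms. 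The paper invokes it without proof or citation, and it carries all of the combinatorial content of majorization.

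Your Abel-summation proof of Karamata uses the partial-sum hypotheses $D_m\ge 0$ directly. It is therefore self-contained and bypasses that lemma; the only analytic input is monotonicity of divided differences. It also shows exactly where the equal-total assumption is used. In general the Abel identity carries an extra boundary term $c_G D_G$, and $c_G<0$ because $\phi$ is decreasing. So your conclusion still holds if equality of totals is weakened to $\sum_g A^{(2)}_g\ge\sum_g A^{(1)}_g$. In exchange, the paper's route mirrors the intuition the authors want to convey: shifting negative activation from heavily loaded groups to lightly loaded ones lowers the score.

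One simplification for the step you flagged as delicate. Because $\phi$ is $C^1$, you can define $c_g=\int_0^1\phi'\bigl(s x_g+(1-s)y_g\bigr)\,ds$ uniformly; this reduces to $\phi'(x_g)$ when $x_g=y_g$. Then $c_m\ge c_{m+1}$ follows immediately from $\phi'$ being nondecreasing and $s x_m+(1-s)y_m\ge s x_{m+1}+(1-s)y_{m+1}$, with no appeal to the three-chord lemma.
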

\begin{proof}
    Define
\begin{equation}
    \phi(a)=\frac{P}{P+a},\qquad a\ge 0.
    \end{equation}
    
    Since  $P>0$ ,
    \begin{equation}
    \phi'(a)=-\frac{P}{(P+a)^2}<0, \qquad \phi''(a)=\frac{2P}{(P+a)^3}>0,
    \end{equation}
    so  $\phi$  is decreasing and strictly convex on  $[0,\infty)$ .
    
    We first establish the following elementary fact: for any $x \geq y \geq 0$, if the total mass $x+y$ is fixed, then transferring a small positive $\delta$ from the larger entry $x$ to the smaller entry $y$ cannot increase $\phi(x)+\phi(y)$. Let  $x\ge y\ge 0$ , and for any  $0\le \delta\le (x-y)/2$ , define
    \begin{equation}
    x'=x-\delta,\qquad y'=y+\delta.
    \end{equation}
    Consider
    \begin{equation}
    h(t)=\phi(x-t)+\phi(y+t), \qquad 0\le t\le (x-y)/2.
    \end{equation}
    Then
    \begin{equation}
    h'(t)=\phi'(y+t)-\phi'(x-t)\le 0,
    \end{equation}
    because  $\phi'$  is increasing and  $y+t\le x-t$ . Hence
    \begin{equation}
    \phi(x-\delta)+\phi(y+\delta)\le \phi(x)+\phi(y).
    \end{equation}
    Thus, transferring mass from a larger component to a smaller one never increases  $\sum_g \phi(A_g)$ .
    
    Now, since  $\mathbf A^{(2)}$  is more balanced than  $\mathbf A^{(1)}$  and both vectors have the same total sum,  $\mathbf A^{(2)}$  can be obtained from  $\mathbf A^{(1)}$  through a finite sequence of such pairwise balancing operations: each step moves some amount from a larger entry to a smaller one while preserving the total sum. Applying the two-point inequality at every step yields
    \begin{equation}
    \sum_{g=1}^G \phi(A_g^{(2)})\le \sum_{g=1}^G \phi(A_g^{(1)}).
    \end{equation}
    Therefore,
    \begin{equation}
    \frac{1}{G}\sum_{g=1}^{G}\frac{P}{P+A^{(2)}_g} \le \frac{1}{G}\sum_{g=1}^{G}\frac{P}{P+A^{(1)}_g},
    \end{equation}
    This completes the proof.  
\end{proof}

\section{More Detailed Results}
%更多的超参数实验结果
\subsection{Statistical Significance}
We control the test-time data ordering with different random seeds and report OOD detection results on the Four-OOD benchmark, using ImageNet-1K as the ID dataset. Across three random seeds, our method achieves an AUROC of 98.51 ± 0.04 and an FPR95 of 6.80 ± 0.09, demonstrating its robustness to variations in test-time data ordering.

\subsection{More Hyperparameter Experiments} \label{sup:hyper}

\begin{figure*}[h]
    \centering
    \begin{minipage}[t]{0.33\textwidth}
        \centering
        \includegraphics[width=\linewidth]{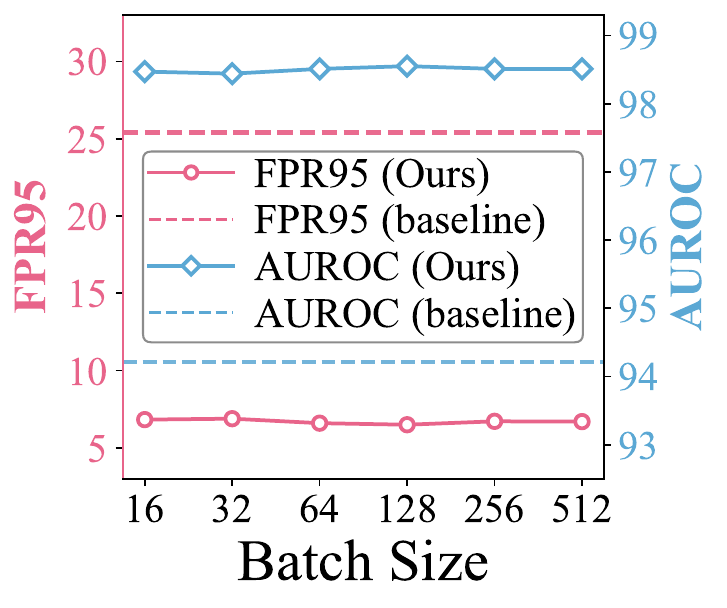}
        \vspace{-5mm}
        \caption*{(a) Effect of batch size}
    \end{minipage}\hspace{-2mm}
    \begin{minipage}[t]{0.33\textwidth}
        \centering
        \includegraphics[width=\linewidth]{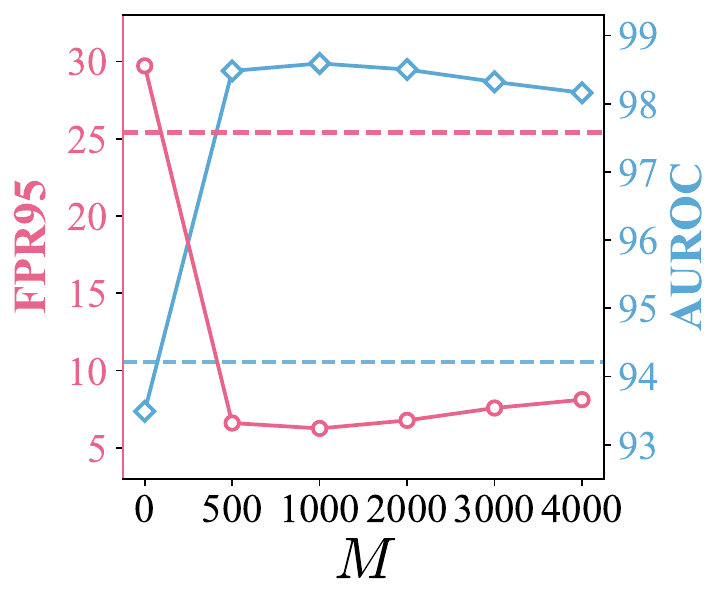}
        \vspace{-5mm}
        \caption*{(b) Effect of bank capacity}
    \end{minipage}\hspace{-2mm}
    \begin{minipage}[t]{0.33\textwidth}
        \centering
        \includegraphics[width=\linewidth]{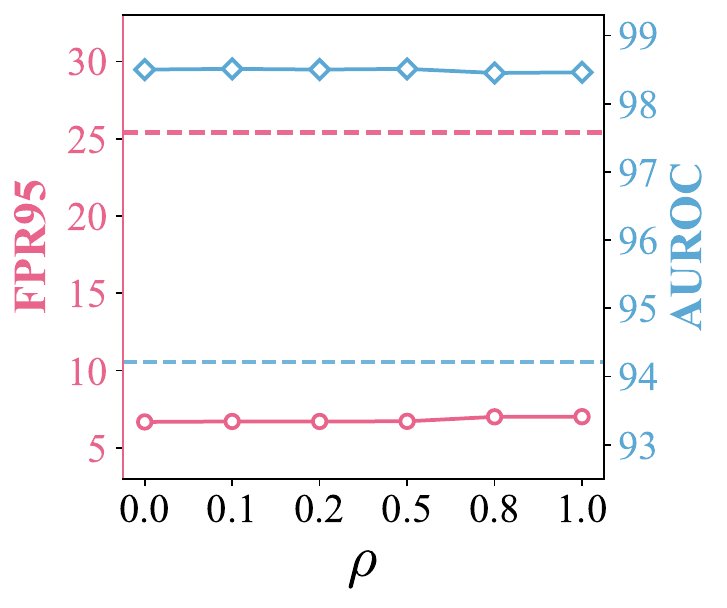}
        \vspace{-5mm}
        \caption*{(c) Effect of buffer selection ratio}
    \end{minipage}
    \vspace{-2mm}
    \caption{Ablation studies on (a) inference batch size, (b) bank capacity $M$, and (c) buffer selection ratio $\rho$. Results are averaged over the Four-OOD benchmark with ImageNet-1K as ID.}
    \label{fig:ablation_three}
    \vspace{-0.5cm}
\end{figure*}

In Section \ref{sec:buffer_update}, we select the half of the buffer with the smallest ID-similarity scores $\Delta$ during the Flash operation. More generally, this design can be formulated with a buffer selection ratio $\rho \in [0,1]$, where $\mathrm{Top}_{\rho M,\Delta}(\mathcal{Q})$ denotes the $\rho M$ buffered candidates with the smallest $\Delta$ values. The Flash operation then becomes
\begin{equation}
\mathcal{B}^{\mathrm{new}}
\leftarrow
\mathrm{RandSample}
\left(
\mathcal{B}^{\mathrm{old}}
\cup
\mathrm{Top}_{\rho M,\Delta}(\mathcal{Q}^{\mathrm{old}} \cup \{\boldsymbol{t}^-_{\mathrm{ov}}\}),
M
\right),
\quad
\mathcal{Q}^{\mathrm{new}}
\leftarrow
\varnothing.
\end{equation}
Figure~\ref{fig:ablation_three} further shows the effects of batch size, bank capacity $M$, and buffer selection ratio $\rho$. We observe that: (1) our method remains stable across different batch sizes, indicating that prototype estimation is insensitive to the mini-batch configuration; (2) increasing $M$ improves performance over the static baseline, while the gains quickly saturate, suggesting that a moderate bank capacity is sufficient to capture useful OOD semantics; and (3) varying $\rho$ has little effect on performance, showing that the Flash operation is robust to the buffer selection ratio. We therefore set $\rho=0.5$ by default, which balances the reuse of high-quality embeddings in the bank with adaptation to newly emerging OOD semantics.

We also analyze the sensitivity to the group number \(G\) in Table \ref{tab:group_factor}. 
The performance improves substantially when moving from \(G=1\) to multiple groups and remains stable for \(G \in \{2,5,8,10\}\). 
This confirms that group-wise aggregation effectively alleviates denominator inflation and is not sensitive to the exact choice of \(G\).
\begin{table}[h]
    \centering
    \vspace{-0.5cm}
    \caption{Sensitivity analysis of the group number $G$.}
    \begin{tabular}{l|ccccc}
        \toprule
        $G$ & 1 & 2 & 5 & 8 & 10 \\
        \midrule
        AUROC  & 93.67 & 98.26 & 98.51 & 98.41 & 98.29 \\
        \bottomrule
    \end{tabular}
    \label{tab:group_factor}
    \vspace{-0.5cm}
\end{table}

\subsection{Detailed results on various ID datasets} \label{sp:ID}
\begin{table}[h]
	\small
	\centering
	\caption{OOD detection performance comparison on various ID datasets.}
	\label{tab:backbone_d}
	\scalebox{0.7}{
		\begin{tabular}{@{}c|c|cccccccc|cc@{}}
			\toprule
			\multirow{3}{*}{ID datasets} & \multirow{3}{*}{Methods} & \multicolumn{8}{c|}{OOD Datasets} & \multicolumn{2}{c}{\multirow{2}{*}{Average}} \\
			&  & \multicolumn{2}{c}{iNaturalist} & \multicolumn{2}{c}{SUN} & \multicolumn{2}{c}{Places} & \multicolumn{2}{c|}{Textures} & \multicolumn{2}{c}{} \\ \cmidrule(l){3-12} 
			&  & AUROC$\uparrow$ & FPR95$\downarrow$ & AUROC$\uparrow$ & FPR95$\downarrow$ & AUROC$\uparrow$ & FPR95$\downarrow$ & AUROC$\uparrow$ & FPR95$\downarrow$ & AUROC$\uparrow$ & FPR95$\downarrow$ \\ \midrule
			\multirow{4}{*}{Food-101} 
				& NegLabel & 99.99 & 0.01 & 99.99 & 0.01 & 99.99 & 0.01 & 99.60 & 1.61 & 99.89 & 0.41 \\
				& CSP & \textbf{100.00} & \textbf{0.00} & \textbf{100.00} & \textbf{0.00} & 99.99 & 0.01 & 99.63 & 1.40 & 99.91 & 0.35 \\
				& InterNeg & \textbf{100.00} & \textbf{0.00} & \textbf{100.00} & \textbf{0.00} & \textbf{100.00} & \textbf{0.00} & 99.91 & 0.51 & 99.98 & 0.13 \\
				& Ours & \textbf{100.00} & \textbf{0.00} & \textbf{100.00} & \textbf{0.00} & \textbf{100.00} & \textbf{0.00} & \textbf{99.94} & \textbf{0.25} & \textbf{99.99} & \textbf{0.06} \\ \midrule
			\multirow{4}{*}{ImageNet-Sketch} 
				& NegLabel & 99.34 & 2.24 & 94.93 & 22.73 & 90.78 & 38.62 & 89.29 & 46.10 & 93.59 & 27.42 \\
				& CSP & 99.49 & 1.60 & 96.41 & 15.30 & 92.51 & 31.41 & 92.95 & 29.86 & 95.34 & 19.54 \\
				& InterNeg & 99.36 & 2.61 & 99.33 & 2.45 & 97.76 & 8.73 & 89.93 & 48.85 & 96.59 & 15.66 \\
				& Ours & \textbf{99.97} & \textbf{0.13} & \textbf{99.97} & \textbf{0.12} & \textbf{99.81} & \textbf{0.75} & \textbf{98.50} & \textbf{7.11} & \textbf{99.56} & \textbf{2.03} \\ \midrule
			\multirow{4}{*}{ImageNet-R} 
				& NegLabel & 99.58 & 1.60 & 96.03 & 15.77 & 91.97 & 29.48 & 90.60 & 35.67 & 94.55 & 20.63 \\
				& CSP & 99.79 & 0.89 & 98.49 & 6.16 & 95.41 & 18.46 & 96.44 & 10.16 & 97.53 & 8.92 \\
				& InterNeg & 99.90 & 0.38 & 99.50 & 2.09 & 96.73 & 14.00 & 96.65 & 13.07 & 98.20 & 7.38 \\
				& Ours & \textbf{99.99} & \textbf{0.02} & \textbf{99.95} & \textbf{0.13} & \textbf{99.58} & \textbf{1.66} & \textbf{98.95} & \textbf{5.46} & \textbf{99.62} & \textbf{1.82} \\ \midrule
			\multirow{4}{*}{ImageNetV2} 
				& NegLabel & 99.40 & 2.47 & 94.46 & 25.69 & 90.00 & 42.03 & 88.46 & 48.90 & 93.08 & 29.77 \\
				& CSP & 99.54 & 1.76 & 96.10 & 17.16 & 91.66 & 34.12 & 92.76 & 29.65 & 95.02 & 20.67 \\
				& InterNeg & 99.82 & 0.60 & 98.32 & 8.47 & 94.34 & 23.52 & 94.37 & 21.61 & 96.71 & 13.55 \\
				& Ours & \textbf{99.90} & \textbf{0.36} & \textbf{99.16} & \textbf{4.32} & \textbf{96.52} & \textbf{16.21} & \textbf{97.21} & \textbf{11.29} & \textbf{98.20} & \textbf{8.05} \\ \bottomrule
	\end{tabular}}
\end{table}

\subsection{Detailed OOD detection results on the OpenOOD benchmark} \label{sq:openood}

Table \ref{tab:openood_imagenet_full} provides the detailed breakdown of our ImageNet-1K results under the OpenOOD benchmark. 
Our method achieves strong performance on both near-OOD and far-OOD splits, with particularly low FPR95 on far-OOD datasets such as iNaturalist and Textures. 

For CIFAR-10/100, AdaNeg~\cite{zhang2024adaneg} observes that using a larger number of static negative labels $L$ yields better performance. 
Following this setting, we set $L$ to 70,000 while keeping all other hyperparameters unchanged, and further evaluate our method on CIFAR-100 and CIFAR-10 as ID datasets in Tables~\ref{tab:ood_results_openood_cifar100}--\ref{tab:full_cifar10}. 
On CIFAR-100, our method substantially improves over InterNeg, reducing near-OOD FPR95 from 62.54\% to 33.44\% and far-OOD FPR95 from 20.02\% to 2.90\%. 
On CIFAR-10, our method also achieves the best performance. 
These results demonstrate that our method perform well beyond ImageNet-1K and remains effective across different ID label spaces.

\begin{table}[!h]
  \centering
\caption{Detailed OOD detection results on the OpenOOD benchmark, where ImageNet-1K is adopted as ID dataset.}\label{tab:openood_imagenet_full}
\begin{tabular}{ll|c|c}
\toprule
Settings & OOD Datasets & FPR95 $\downarrow$ & AUROC $\uparrow$\\
\midrule
\multirow{3}{*}{Near-OOD} & SSB-hard \cite{vaze2021open} & 61.68 & 81.38 \\
                        & NINCO \cite{bitterwolf2023or}   & 54.09 & 83.91 \\
                        & \textbf{Mean}  & \textbf{57.88} & \textbf{82.65} \\
\midrule     
\multirow{4}{*}{Far-OOD} & iNaturalist \cite{van2018inaturalist} & 0.21 & 99.93 \\
                        & Textures   \cite{cimpoi2014describing} &10.09  & 97.83 \\
                        & OpenImage-O  \cite{wang2022vim} & 28.68 & 92.99 \\
                        & \textbf{Mean} & \textbf{12.99} & \textbf{96.92} \\
\bottomrule
\end{tabular}
\vspace{-0.2cm}
\end{table}

% As illustrated in Tab. \ref{tab:openood_cifar_adaneg}, the advantage of our AdaNeg also holds on the CIFAR10/100 dataset. Notably, our method achieves new state-of-the-art results in the far-OOD setting under a zero-shot training-free manner, even outperforming its competitors training on the full labeled training set.

\begin{table}[H]
    \small
    \centering
    \caption{OOD detection results with ID dataset of CIFAR-100 on the OpenOOD benchmark using CLIP ViT-B/16 architecture. Full results are available in Table \ref{tab:full_cifar100}.}
    \label{tab:ood_results_openood_cifar100}
    \begin{tabular}{lcccc}
    \toprule
    \multicolumn{1}{l|}{\multirow{2}{*}{Methods}} & \multicolumn{2}{c|}{FPR95 $\downarrow$} & \multicolumn{2}{c}{AUROC $\uparrow$}  \\ \cmidrule{2-5}  
    \multicolumn{1}{l|}{} & Near-OOD  & \multicolumn{1}{c|}{Far-OOD}  & Near-OOD & Far-OOD  \\
    \midrule
    \multicolumn{5}{c}{\textcolor{gray}{\textbf{Methods requiring training on ID or extra data}}} \\
    
    \multicolumn{1}{l|}{\textcolor{gray}{GEN \cite{liu2023gen}}} & \textcolor{gray}{–} & \multicolumn{1}{c|}{\textcolor{gray}{–}} & \textcolor{gray}{81.31} & \textcolor{gray}{79.68}  \\
    \multicolumn{1}{l|}{\textcolor{gray}{VOS \cite{du2022vos} + EBO \cite{liu2020energy}}} & \textcolor{gray}{–} & \multicolumn{1}{c|}{\textcolor{gray}{–}} & \textcolor{gray}{80.93} & \textcolor{gray}{81.32}  \\
    \multicolumn{1}{l|}{\textcolor{gray}{SCALE \cite{xu2023scaling}}}  & \textcolor{gray}{–} & \multicolumn{1}{c|}{\textcolor{gray}{–}} & \textcolor{gray}{80.99} & \textcolor{gray}{81.42} \\
    \multicolumn{1}{l|}{\textcolor{gray}{OE \cite{hendrycks2018deep} + MSP \cite{hendrycks17baseline}}}  & \textcolor{gray}{–} & \multicolumn{1}{c|}{\textcolor{gray}{–}} & \textcolor{gray}{88.30} & \textcolor{gray}{81.41} \\
    \midrule
    \multicolumn{5}{c}{\textbf{Zero-shot methods (no training on ID or extra data)}} \\
    
    \multicolumn{1}{l|}{MCM \cite{ming2022delving}} & 75.20 & \multicolumn{1}{c|}{59.32} & 71.00 & 76.00\\
    \multicolumn{1}{l|}{NegLabel \cite{jiang2024negative}} & 71.44 & \multicolumn{1}{c|}{40.92} & 70.58 & 89.68  \\
    \multicolumn{1}{l|}{AdaNeg \cite{zhang2024adaneg}}  & 59.07 & \multicolumn{1}{c|}{29.35} & 84.60 & 95.25 \\
    \multicolumn{1}{l|}{InterNeg\cite{xu2026mind}}  & 62.54 & \multicolumn{1}{c|}{20.02} & 85.45 & 96.39 \\
    \multicolumn{1}{l|}{\cellcolor{top1}\textbf{Ours}}  & \cellcolor{top1}\textbf{33.44} & \multicolumn{1}{c|}{\cellcolor{top1}\textbf{2.90}} & \cellcolor{top1}\textbf{88.36} & \cellcolor{top1}\textbf{99.05} \\
    \bottomrule
    \end{tabular}
    \end{table}

    \begin{table}[h!]
    \small
    \centering
    \caption{Full results of our method with ID dataset of CIFAR-100 on the OpenOOD benchmark.}
    \label{tab:full_cifar100}
    \begin{tabular}{l|l|c|c}
    \toprule
    Near / Far OOD & Datasets & FPR95 $\downarrow$ & AUROC $\uparrow$ \\
    \midrule
    \multirow{3}{*}{Near-OOD} & CIFAR-10 \cite{krizhevsky2009learning} & 51.32 & 82.25 \\
     & TIN \cite{le2015tiny} & 15.57 & 94.47 \\
     & \textbf{Mean} & \textbf{33.44} & \textbf{88.36} \\
    \midrule
    \multirow{5}{*}{Far-OOD} & MNIST \cite{deng2012mnist} & 0.00 & 100.00 \\
     & SVHN \cite{netzer2011reading} & 0.00 & 100.00 \\
     & Texture / DTD \cite{cimpoi2014describing} & 3.14 & 98.90 \\
     & Places365 \cite{zhou2017places} & 8.47 & 97.30 \\
     & \textbf{Mean} & \textbf{2.90} & \textbf{99.05} \\
    \bottomrule
    \end{tabular}
    \end{table}
    
    \begin{table}[!h]
    \small
    \centering
    \caption{OOD detection results with ID dataset of CIFAR-10 on the OpenOOD benchmark using CLIP ViT-B/16 architecture. Full results are available in Table \ref{tab:full_cifar10}.}
    \label{tab:ood_results_openood_cifar10}
    \begin{tabular}{lcccc}
    \toprule
    \multicolumn{1}{l|}{\multirow{2}{*}{Methods}} & \multicolumn{2}{c|}{FPR95 $\downarrow$} & \multicolumn{2}{c}{AUROC $\uparrow$}  \\ \cmidrule{2-5}  
    \multicolumn{1}{l|}{} & Near-OOD  & \multicolumn{1}{c|}{Far-OOD}  & Near-OOD & Far-OOD  \\
    \midrule
    \multicolumn{5}{c}{\textcolor{gray}{\textbf{Methods requiring training on ID or extra data}}} \\
    
    \multicolumn{1}{l|}{\textcolor{gray}{PixMix \cite{Hendrycks_2022_CVPR} + KNN \cite{sun2022out}}} & \textcolor{gray}{–} & \multicolumn{1}{c|}{\textcolor{gray}{–}} & \textcolor{gray}{93.10} & \textcolor{gray}{95.94}  \\
    \multicolumn{1}{l|}{\textcolor{gray}{OE \cite{hendrycks2018deep} + MSP \cite{hendrycks17baseline}}} & \textcolor{gray}{–} & \multicolumn{1}{c|}{\textcolor{gray}{–}} & \textcolor{gray}{94.82} & \textcolor{gray}{96.00}  \\
    \multicolumn{1}{l|}{\textcolor{gray}{PixMix \cite{Hendrycks_2022_CVPR} + RotPred \cite{hendrycks2019using}}}  & \textcolor{gray}{–} & \multicolumn{1}{c|}{\textcolor{gray}{–}} & \textcolor{gray}{94.86} & \textcolor{gray}{98.18} \\
    \midrule
    \multicolumn{5}{c}{\textbf{Zero-shot methods (no training on ID or extra data)}} \\
    
    \multicolumn{1}{l|}{MCM \cite{ming2022delving}} & 30.86 & \multicolumn{1}{c|}{17.99} & 91.92 & 95.54\\
    \multicolumn{1}{l|}{NegLabel \cite{jiang2024negative}} & 28.75 & \multicolumn{1}{c|}{6.60} & 94.58 & 98.39  \\
    \multicolumn{1}{l|}{AdaNeg \cite{zhang2024adaneg}}  & 20.40
    & \multicolumn{1}{c|}{2.79} & 94.78 & 99.26 \\
    \multicolumn{1}{l|}{InterNeg}  & 23.93 & \multicolumn{1}{c|}{2.59} & 95.13 & 99.29 \\
    \multicolumn{1}{l|}{\cellcolor{top1}\textbf{Ours}}  & \cellcolor{top1}\textbf{17.73} & \multicolumn{1}{c|}{\cellcolor{top1}\textbf{1.47}} & \cellcolor{top1}\textbf{95.53} & \cellcolor{top1}\textbf{99.53} \\
    \bottomrule
    \end{tabular}
    \end{table}

    \begin{table}[!h]
    \small
    \centering
    \caption{Full results of our method with ID dataset of CIFAR-10 on the OpenOOD benchmark.}
    \label{tab:full_cifar10}
    \begin{tabular}{l|l|c|c}
    \toprule
    Near / Far OOD & Datasets & FPR95 $\downarrow$ & AUROC $\uparrow$ \\
    \midrule
    \multirow{3}{*}{Near-OOD} & CIFAR-100 \cite{krizhevsky2009learning} & 30.77 & 92.34  \\
     & TIN \cite{le2015tiny} & 4.68 & 98.72 \\
     & \textbf{Mean} & \textbf{17.73} & \textbf{95.53} \\
    \midrule
    \multirow{5}{*}{Far-OOD} & MNIST \cite{deng2012mnist} & 0.0 & 100.0 \\
     & SVHN \cite{netzer2011reading} & 0.0 & 100.0 \\
     & Texture \cite{cimpoi2014describing} & 0.76 & 99.77 \\
     & Places365 \cite{zhou2017places} & 5.11 & 98.33 \\
     & \textbf{Mean} & \textbf{1.47} & \textbf{99.53} \\
    \bottomrule
    \end{tabular}
    \end{table}

\subsection{Ablation of Different CLIP Architectures}
\label{sp:clip}
Table~\ref{tab:backbone} reports results with different CLIP backbone architectures. 
Our method consistently outperforms prior negative-label-based and test-time adaptation baselines across both CNN-based and ViT-based CLIP encoders. 
These results indicate that our proposed dynamic negative embedding expansion is not tied to a specific CLIP architecture and can be applied to different visual encoders.
% As shown in Table \ref{tab:backbone}, we evaluate multiple CLIP backbone architectures in the Four-OOD setting using ImageNet-1K as the ID dataset. The experimental results demonstrate that our method consistently outperforms all baseline approaches by a substantial margin across different backbone architectures. This robust performance advantage highlights both the effectiveness and robustness of our proposed method.

\begin{table}[!ht]
\centering
\caption{OOD detection results with ID dataset of ImageNet-1K and traditional Four-OOD datasets using different CLIP backbone architectures.}
\label{tab:backbone}
\begin{adjustbox}{width=\textwidth}
\begin{tabular}{lccccccccccc}
\toprule
\multicolumn{1}{l|}{\multirow{3}{*}{Backbones}} & 
\multicolumn{1}{c|}{\multirow{3}{*}{Methods}} & \multicolumn{8}{c|}{OOD Datasets}  & \multicolumn{2}{c}{\multirow{2}{*}{Average}} \\
\multicolumn{1}{l|}{} & \multicolumn{1}{l|}{}  & \multicolumn{2}{c}{iNaturalist}     & \multicolumn{2}{c}{SUN}             & \multicolumn{2}{c}{Places}          & \multicolumn{2}{c|}{Textures}                            & \multicolumn{2}{c}{}                         \\ \cmidrule{3-12} 
\multicolumn{1}{l|}{} & \multicolumn{1}{l|}{} & AUROC$\uparrow$ & FPR95$\downarrow$ & AUROC$\uparrow$ & FPR95$\downarrow$ & AUROC$\uparrow$ & FPR95$\downarrow$ & AUROC$\uparrow$ & \multicolumn{1}{c|}{FPR95$\downarrow$} & AUROC$\uparrow$      & FPR95$\downarrow$     \\ \midrule

\multicolumn{1}{l|}{\multirow{5}{*}{ResNet50}} & \multicolumn{1}{c|}{NegLabel} & 99.24 & 2.88 & 94.54 & 26.51 & 89.72 & 42.60 & 88.40 & \multicolumn{1}{c|}{50.80} & 92.97 & 30.70 \\
 \multicolumn{1}{c|}{} & 
 \multicolumn{1}{c|}{CSP} & 99.46 & 1.95 & 95.73 & 19.05 & 90.39 & 38.58 & 92.41 & \multicolumn{1}{c|}{32.66} & 94.50 & 23.06 \\
  \multicolumn{1}{c|}{} & 
 \multicolumn{1}{c|}{AdaNeg} & 99.58 & 1.18 & 97.37 & 10.56 & 93.84 & 43.19 & 94.18 & \multicolumn{1}{c|}{35.00} & 96.24 & 22.48 \\
  \multicolumn{1}{c|}{} & 
 \multicolumn{1}{c|}{InterNeg} & 99.56 & 1.16 & 98.35 & 7.99 & 93.71 & 37.82 & 96.05 &   \multicolumn{1}{c|}{21.92} & 96.92 & 17.22 \\
 \multicolumn{1}{c|}{} & 
 \multicolumn{1}{c|}{Ours} & \textbf{99.85} & \textbf{0.43} & \textbf{99.05} & \textbf{4.46} & \textbf{96.19} & \textbf{15.87} & \textbf{97.55} & \multicolumn{1}{c|}{\textbf{11.42}} & \textbf{98.16} & \textbf{8.04} \\
\midrule

\multicolumn{1}{l|}{\multirow{5}{*}{ResNet101}} & \multicolumn{1}{c|}{NegLabel} & 99.27 & 3.11 & 94.96 & 24.55 & 89.42 & 44.82 & 87.22 & \multicolumn{1}{c|}{52.78} & 92.72 & 31.32 \\
 \multicolumn{1}{c|}{} & 
 \multicolumn{1}{c|}{CSP} & 99.47 & 2.04 & 95.71 & 19.50 & 90.27 & 39.57 & 90.59 & \multicolumn{1}{c|}{38.67} & 94.01 & 24.95 \\  \multicolumn{1}{c|}{} &
  \multicolumn{1}{c|}{AdaNeg} & 99.69 & 0.78 & 97.65 & 10.61 & 94.00 & 40.38 & 93.59 & \multicolumn{1}{c|}{39.44} & 96.23 & 22.80 \\
  \multicolumn{1}{c|}{} & 
 \multicolumn{1}{c|}{InterNeg} & 99.64 & 0.87 & 98.54 & 7.02 & 93.70 & 38.32 & 95.94 & \multicolumn{1}{c|}{24.74} & 96.96 & 17.74 \\
 \multicolumn{1}{c|}{} & 
 \multicolumn{1}{c|}{Ours} & \textbf{99.89} & \textbf{0.29} & \textbf{99.13} & \textbf{3.88} & \textbf{96.87} & \textbf{13.35} & \textbf{98.00} & \multicolumn{1}{c|}{\textbf{9.43}} & \textbf{98.47} & \textbf{6.74} \\
\midrule

\multicolumn{1}{l|}{\multirow{5}{*}{ViT-B/32}} & \multicolumn{1}{c|}{NegLabel} & 99.11 & 3.73 & 95.27 & 22.48 & 91.72 & 34.94 & 88.57 &  \multicolumn{1}{c|}{50.51} & 93.67 & 27.92 \\
 \multicolumn{1}{c|}{} & 
 \multicolumn{1}{c|}{CSP} & 99.46 & 2.37 & 96.49 & 15.01 & 92.42 & 31.47 & 93.64 &  \multicolumn{1}{c|}{25.09} & 95.50 & 18.49 \\
   \multicolumn{1}{c|}{} & 
  \multicolumn{1}{c|}{AdaNeg} & 99.67 & 0.87 & 97.74 & 9.62 & 93.98 & 36.45 & 94.58 & \multicolumn{1}{c|}{33.26} & 96.49 & 20.05 \\
  \multicolumn{1}{c|}{} & 
 \multicolumn{1}{c|}{InterNeg} & 99.68 & 0.70 & 98.74 & 5.79 & 93.65 & 38.05 & 96.02 & \multicolumn{1}{c|}{23.55} & 97.02 & 17.02 \\
 \multicolumn{1}{c|}{} & 
 \multicolumn{1}{c|}{\textbf{Ours}} & \textbf{99.90} & \textbf{0.28} & \textbf{99.09} & \textbf{4.11} & \textbf{97.16} & \textbf{12.68} & \textbf{97.67} & \multicolumn{1}{c|}{\textbf{10.74}} & \textbf{98.46} & \textbf{6.95} \\
\midrule

\multicolumn{1}{l|}{\multirow{5}{*}{ViT-B/16}} & \multicolumn{1}{c|}{NegLabel} & 99.49 & 1.91 & 95.49 & 20.53 & 91.64 & 35.59 & 90.22 & \multicolumn{1}{c|}{43.56} & 94.21 & 25.40 \\
\multicolumn{1}{c|}{} & 
 \multicolumn{1}{c|}{CSP}  & 99.61 & 1.54 & 96.69 & 13.82 & 92.85 & 29.69 & 93.78 & \multicolumn{1}{c|}{25.78} & 95.73 & 17.71 \\
   \multicolumn{1}{c|}{} & 
  \multicolumn{1}{c|}{AdaNeg} & 99.71 & 0.59 & 97.44 & 9.50 & 94.55 & 34.34 & 94.93 & \multicolumn{1}{c|}{31.27} & 96.66 & 18.93  \\
  \multicolumn{1}{c|}{} & 
 \multicolumn{1}{c|}{InterNeg} & 99.79 & 0.40 & 98.68 & 6.78 & 95.01 & 27.11 & 96.26 & \multicolumn{1}{c|}{21.85} & 97.43 & 14.04 \\
 \multicolumn{1}{c|}{} & 
 \multicolumn{1}{c|}{\textbf{Ours}} & \textbf{99.93} & \textbf{0.21} & \textbf{99.14} & \textbf{3.84} & \textbf{97.14} & \textbf{12.73} & \textbf{97.83} & \multicolumn{1}{c|}{\textbf{10.09}} & \textbf{98.51} & \textbf{6.72} \\
% \midrule
% \multicolumn{1}{l|}{\multirow{5}{*}{ViT-L/14}} & \multicolumn{1}{c|}{NegLabel}& 99.53 & 1.77 & 95.63 & 22.33 & 93.01 & 32.22 & 89.71 & \multicolumn{1}{c|}{42.92} & 94.47 & 24.81 \\
% \multicolumn{1}{c|}{} & 
%  \multicolumn{1}{c|}{CSP}  & 99.72 & 1.21 & 96.73 & 14.88 & 93.58 & 28.41 & 92.71 & \multicolumn{1}{c|}{28.16} & 95.69 & 18.17 \\
%     \multicolumn{1}{c|}{} & 
%   \multicolumn{1}{c|}{AdaNeg} & 99.82 & 0.26 & 97.97 & 7.94 & 95.12 & 28.67 & 94.24 & \multicolumn{1}{c|}{38.28} & 96.79 & 18.79\\
%   \multicolumn{1}{c|}{} & 
%  \multicolumn{1}{c|}{InterNeg} & 99.88 & 0.21 & 98.75 & 5.88 & 95.03 & 26.82 & 96.07 & \multicolumn{1}{c|}{22.89} & 97.43 & 13.95 \\
%  \multicolumn{1}{c|}{} & 
%  \multicolumn{1}{c|}{\textbf{Ours}} & \textbf{99.96} & \textbf{0.09} & \textbf{99.10} & \textbf{4.51} & \textbf{96.67} & \textbf{14.84} & \textbf{96.79} & \multicolumn{1}{c|}{\textbf{13.49}} & \textbf{98.13} & \textbf{8.23} \\
\bottomrule
\end{tabular}
\end{adjustbox}
\end{table}

\subsection{Detailed Results of Temporal Shifts}
Table~\ref{tab:temporal_shifts_vitb16} provides the detailed results under the Temporal-shift setting. These results show that our buffer-based update strategy achieves significant performance improvements under various temporal shifts.

\begin{table}[h]
    \centering
    \caption{OOD detection results under the Temporal-shift setting, where ImageNet-1K ID dataset and a ViTB/16 CLIP encoder are adopted.}
    \label{tab:temporal_shifts_vitb16}
    \resizebox{\textwidth}{!}{
    \begin{tabular}{lcccccccccc}
        \toprule
        \multirow{2}{*}{Methods} & \multicolumn{2}{c|}{iNaturalist} & \multicolumn{2}{c|}{Sun} & \multicolumn{2}{c|}{Places} & \multicolumn{2}{c|}{Textures} & \multicolumn{2}{c}{Average} \\
        & AUROC$\uparrow$ & \multicolumn{1}{c|}{FPR95$\downarrow$} & AUROC$\uparrow$ & \multicolumn{1}{c|}{FPR95$\downarrow$} & AUROC$\uparrow$ & \multicolumn{1}{c|}{FPR95$\downarrow$} & AUROC$\uparrow$ & \multicolumn{1}{c|}{FPR95$\downarrow$} & AUROC$\uparrow$ & FPR95$\downarrow$ \\
        \midrule
        NegLabel & 99.49 & 1.91 & 99.49 & 20.53 & 91.64 & 35.59 & 90.22 & 43.56 & 94.21 & 25.40 \\
        \midrule

        %% Block 1
        \multicolumn{11}{c}{\makebox[0.3em][c]{\textit{i}}~$\rightarrow$~\makebox[0.3em][c]{\textit{S}}~$\rightarrow$~\makebox[0.3em][c]{\textit{P}}~$\rightarrow$~\makebox[0.3em][c]{\textit{T}}} \\

        w/o buffer & 99.93 & 0.17 & 98.70 & 6.38 & 96.81 & 13.68 & 92.45 & 37.18 & 96.97 & 14.35 \\
        w/ buffer  & 99.93 & 0.21 & 98.89 & 5.32 & 97.10 & 12.29 & 95.39 & 22.84 & \textbf{97.83} & \textbf{10.17} \\
        \midrule

        %% Block 2
        \multicolumn{11}{c}{\makebox[0.3em][c]{\textit{S}}~$\rightarrow$~\makebox[0.3em][c]{\textit{P}}~$\rightarrow$~\makebox[0.3em][c]{\textit{T}}~$\rightarrow$~\makebox[0.3em][c]{\textit{i}}} \\
        w/o buffer & 98.71 & 3.49 & 99.14 & 4.04 & 96.64 & 14.43 & 92.45 & 37.73 & 96.74 & 14.92 \\
        w/ buffer  & 99.73 & 0.91 & 99.14 & 3.84 & 97.01 & 12.64 & 95.01 & 23.71 & \textbf{97.72} & \textbf{10.28} \\
        \midrule

        %% Block 3
        \multicolumn{11}{c}{\makebox[0.3em][c]{\textit{P}}~$\rightarrow$~\makebox[0.3em][c]{\textit{T}}~$\rightarrow$~\makebox[0.3em][c]{\textit{i}}~$\rightarrow$~\makebox[0.3em][c]{\textit{S}}} \\
        w/o buffer & 98.87 & 2.90 & 98.20 & 7.17 & 97.13 & 12.50 & 93.39 & 33.21 & 96.90 & 13.95 \\
        w/ buffer  & 99.75 & 0.89 & 98.93 & 5.24 & 97.14 & 12.73 & 95.07 & 22.75 & \textbf{97.72} & \textbf{10.40} \\
        \midrule

        %% Block 4
        \multicolumn{11}{c}{\makebox[0.3em][c]{\textit{T}}~$\rightarrow$~\makebox[0.3em][c]{\textit{i}}~$\rightarrow$~\makebox[0.3em][c]{\textit{S}}~$\rightarrow$~\makebox[0.3em][c]{\textit{P}}} \\
        w/o buffer & 99.45 & 1.54 & 96.27 & 20.16 & 95.17 & 21.11 & 97.81 & 10.12 & 97.18 & 13.23 \\
        w/ buffer  & 99.80 & 0.72 & 98.62 & 6.51 & 96.87 & 12.82 & 97.83 & 10.09 & \textbf{98.28} & \textbf{7.54} \\
        \bottomrule
    \end{tabular}
    }
\end{table}

\subsection{ID/OOD Data Ordering} \label{sec:data_order}
% \begin{table}[h]
%     \centering
%     \caption{OOD detection results under the temporal shifts, where ImageNet-1K ID dataset and a ViTB/16 CLIP encoder are adopted.}
%     \label{tab:temporal_shifts_vitb16_dup}
%     \resizebox{\textwidth}{!}{
%     \begin{tabular}{lcccccccccc}
%         \toprule
%         \multirow{2}{*}{Methods} & \multicolumn{2}{c|}{iNaturalist} & \multicolumn{2}{c|}{Sun} & \multicolumn{2}{c|}{Places} & \multicolumn{2}{c|}{Textures} & \multicolumn{2}{c}{Average} \\
%         & AUROC$\uparrow$ & \multicolumn{1}{c|}{FPR95$\downarrow$} & AUROC$\uparrow$ & \multicolumn{1}{c|}{FPR95$\downarrow$} & AUROC$\uparrow$ & \multicolumn{1}{c|}{FPR95$\downarrow$} & AUROC$\uparrow$ & \multicolumn{1}{c|}{FPR95$\downarrow$} & AUROC$\uparrow$ & FPR95$\downarrow$ \\
%         \midrule
%         NegLabel & 99.49 & 1.91 & 99.49 & 20.53 & 91.64 & 35.59 & 90.22 & 43.56 & 94.21 & 25.40 \\
%         \midrule
%         \multicolumn{11}{c}{\textbf{InterNeg}} \\
%         Forward & 99.93 & 0.25 & 99.61 & 1.32 & 97.61 & 9.28 & 98.93 & 5.48 & 99.02 & 4.08 \\
%         Reverse & 99.88 & 0.41 & 99.04 & 4.80 & 96.33 & 16.31 & 96.20 & 14.93 & 97.86 & 9.11 \\
%         \midrule
%         \multicolumn{11}{c}{\textbf{Ours}} \\
%         Forward & 99.95 & 0.24 & 99.72 & 1.12 & 97.95 & 7.96 & 99.01 & 4.89 & 99.15 & 3.55 \\
%         Reverse & 99.93 & 0.21 & 99.33 & 3.35 & 97.76 & 10.01 & 98.04 & 8.72 & 98.76 & 5.57 \\
%         \bottomrule
%     \end{tabular}
%     }
%   \end{table}
We further evaluate the robustness of our method under different batch ordering patterns, following the protocol used in CLIP-Scope~\cite{fu2025clipscope}. In addition to the default random ordering, we consider forward and reverse ordering patterns:
\begin{itemize}
	\item Forward: all ID samples are processed before all OOD samples.
	\item Reverse: all OOD samples are processed before all ID samples.
\end{itemize}
As shown in Table~\ref{tab:temporal_shifts_vitb16_dup}, Our method consistently outperforms both the static NegLabel baseline and the dynamic InterNeg baseline under both forward and reverse settings. In the forward order, our method reduces the average FPR95 from 4.08\% to 3.55\%, while in the reverse order, it achieves a larger reduction from 9.11\% to 5.57\%. The consistent gains across different arrival orders indicate that our method is robust to batch ordering variations and can maintain reliable adaptation under changing test-time distributions.
  \begin{table}[h]
    \centering
    \caption{Performance (\%) of the proposed TINS method and other methods under two extreme ID/OOD ordering patterns (Forward, Reverse). The utilized CLIP model is ViT-B/16. The ID dataset is ImageNet-1K.}
    \label{tab:temporal_shifts_vitb16_dup}
    \resizebox{\textwidth}{!}{
    \begin{tabular}{lcccccccccc}
        \toprule
        \multirow{2}{*}{Methods} & \multicolumn{2}{c|}{iNaturalist} & \multicolumn{2}{c|}{Sun} & \multicolumn{2}{c|}{Places} & \multicolumn{2}{c|}{Textures} & \multicolumn{2}{c}{Average} \\
        & AUROC$\uparrow$ & \multicolumn{1}{c|}{FPR95$\downarrow$} & AUROC$\uparrow$ & \multicolumn{1}{c|}{FPR95$\downarrow$} & AUROC$\uparrow$ & \multicolumn{1}{c|}{FPR95$\downarrow$} & AUROC$\uparrow$ & \multicolumn{1}{c|}{FPR95$\downarrow$} & AUROC$\uparrow$ & FPR95$\downarrow$ \\
        \midrule
        NegLabel & 99.49 & 1.91 & 99.49 & 20.53 & 91.64 & 35.59 & 90.22 & 43.56 & 94.21 & 25.40 \\
        \midrule
        \multicolumn{11}{c}{\textbf{Forward}} \\
        InterNeg & 99.93 & 0.25 & 99.61 & 1.32 & 97.61 & 9.28 & 98.93 & 5.48 & 99.02 & 4.08 \\
        \rowcolor{top1} \textbf{Ours} & \textbf{99.95} & \textbf{0.24} & \textbf{99.72} & \textbf{1.12} & \textbf{97.95} & \textbf{7.96} & \textbf{99.01} & \textbf{4.89} & \textbf{99.15} & \textbf{3.55} \\
        \midrule
        \multicolumn{11}{c}{\textbf{Reverse}} \\
        InterNeg & 99.88 & 0.41 & 99.04 & 4.80 & 96.33 & 16.31 & 96.20 & 14.93 & 97.86 & 9.11 \\
        \rowcolor{top1} \textbf{Ours} & \textbf{99.93} & \textbf{0.21} & \textbf{99.33} & \textbf{3.35} & \textbf{97.76} & \textbf{10.01} & \textbf{98.04} & \textbf{8.72} & \textbf{98.76} & \textbf{5.57} \\
        \bottomrule
    \end{tabular}
    }
  \end{table}

  \subsection{Analysis under Different ID:OOD Ratios} \label{sup:ratios}
To evaluate our method under imbalanced ID and OOD test settings~\cite{han2024aucseg}, we construct test sets with varying ID:OOD ratios. Specifically, we keep 10,000 SUN samples as OOD data and sample 200, 1,000, 2,000, 10,000, 50,000, 100,000, and 500,000 images from the ImageNet training set as ID data, forming ID:OOD ratios of 1:50, 1:10, 1:5, 1:1, 5:1, 10:1, and 50:1, respectively. As shown in Table~\ref{tab:id_ood_ratio_fpr95}, our method consistently outperforms the strongest baseline InterNeg, and also surpasses the static baseline NegLabel under every ID:OOD ratio. The gains remain clear in both OOD-dominated and ID-dominated regimes, demonstrating that our method is robust to changes in the test composition. 
\begin{table}[h]
    \centering
    \caption{FPR95 ($\downarrow$) across different ID:OOD ratio. `Default' denotes the default Four-OOD benchmark setting, where no specific ID:OOD ratio is enforced.}
    \resizebox{0.7\linewidth}{!}{
    \begin{tabular}{l|cccccccc}
        \toprule
        ID:OOD Ratio & 1:50  & 1:10 & 1:5 & 1:1 & 5:1 & 10:1 & 50:1 & Default \\
        \midrule
        NegLabel & - & - & - & - & - & - & - & 20.53 \\
        InterNeg & 3.80 & 4.67 & 4.78 & 5.34 & 5.82 & 6.73 & 15.66 & 6.78 \\
        \rowcolor{top1} \textbf{Ours} & \textbf{2.25} & \textbf{2.18} & \textbf{2.13} & \textbf{2.47} & \textbf{2.65} & \textbf{3.48} & \textbf{12.22} & \textbf{3.84} \\
   
        \bottomrule
    \end{tabular}
    }
    \label{tab:id_ood_ratio_fpr95}
\end{table}

% | Ratio | Ours FPR95 | InterNeg FPR95 | Ours AUROC | InterNeg AUROC | Ours AUPR | InterNeg AUPR |
% | --- | ---: | ---: | ---: | ---: | ---: | ---: |
% | 1:50 | 3.25 | 3.80 | 99.50 | 99.17 | 87.73 | 82.34 |
% | 1:10 | 2.18 | 4.67 | 99.42 | 98.80 | 96.14 | 92.98 |
% | 1:5 | 2.13 | 4.78 | 99.52 | 98.97 | 97.91 | 95.90 |
% | 1:1 | 2.47 | 5.34 | 99.38 | 98.90 | 99.40 | 98.93 |
% | 5:1 | 2.65 | 5.82 | 99.33 | 98.80 | 99.86 | 99.75 |
% | 10:1 | 3.48 | 6.73 | 99.23 | 98.65 | 99.92 | 99.86 |
% | 50:1 | 12.22 | 15.66 | 97.74 | 96.98 | 99.95 | 99.93 |

\subsection{Sensitivity to Corpus Choice} \label{sup:corpus}
We also study the impact of using different corpora for constructing negative labels. 
Specifically, the default WordNet corpus is replaced by Common-20K and Part-of-Speech, respectively. 
Table~\ref{tab:corpus} shows that our method remains consistently superior to the strongest competing method under both alternatives. 
This demonstrates that our proposed TINS method is robust to the choice of the underlying corpus.
\begin{table}[!h]
    \centering
    \caption{Evaluation with different corpus sources on the Four-OOD benchmark, using ImageNet-1K as the ID dataset.}
    \begin{tabular}{l|l|cc}
        \toprule
        \multirow{2}{*}{Source} & \multirow{2}{*}{Method} & \multicolumn{2}{c}{Average} \\
        \cmidrule{3-4} 
        & & AUROC $\uparrow$ & FPR95 $\downarrow$ \\
        \midrule

        \multirow{5}{*}{Common-20K} 
        & NegLabel & 90.50 & 43.02 \\
        & CSP & 92.06 & 36.56 \\
        & AdaNeg     & 93.12 & 32.39 \\
        & InterNeg & 94.58 & 28.77 \\
        & \textbf{Ours} & \textbf{98.25}  & \textbf{7.48} \\
        \midrule

        \multirow{5}{*}{Part-of-Speech}
        & NegLabel & 92.71 & 32.12 \\
        & CSP & 94.21 & 24.42 \\
        & AdaNeg     & 95.07 & 23.17 \\
        & InterNeg & 95.98 & 18.41 \\
        & \textbf{Ours} & \textbf{98.52} &  \textbf{6.77} \\
        \bottomrule
    \end{tabular}
    \label{tab:corpus}
\end{table}

% \begin{wrapfigure}{r}{0.45\linewidth}
%     \centering
%     \includegraphics[width=0.98\linewidth]{fig/ablation_bs.pdf}
%     \caption{Ablation study on buffer size ($B$) for the overflow buffer, evaluating its effect on AUROC and FPR95. The results demonstrate that an appropriate buffer size helps maintain diversity in the dynamic negative bank and contributes to more stable and robust OOD detection.}
%     \label{fig:ablation_bs}
% \end{wrapfigure}

%%%%%%%%%%%%%%%%%%%%%%%%%%%%%%%%%%%%%%%%%%%%%%%%%%%%%%%%%%%%
% \clearpage
% \newpage
% \input{checklist.tex}

\end{document}